\newtheorem{proposition}{Proposition}
\newtheorem*{definition*}{Definition}
\newtheorem{theorem}{Theorem}
\newtheorem{corollary}{Corollary}
\newcommand\blfootnote[1]{%
  \begingroup
  \renewcommand\thefootnote{}\footnote{#1}%
  \addtocounter{footnote}{-1}%
  \endgroup
}
\title{Rationalizing Text Matching:\\Learning Sparse Alignments via Optimal Transport}
\author{%
  Kyle Swanson\textsuperscript{*} \\
  ASAPP, Inc. \\
  New York, USA \\
  \texttt{kswanson@asapp.com} \\
  \And
  Lili Yu\textsuperscript{*} \\
  ASAPP, Inc. \\
  New York, USA \\
  \texttt{liliyu@asapp.com} \\
  \And
  Tao Lei \\
  ASAPP, Inc. \\
  New York, USA \\
  \texttt{tao@asapp.com} \\
}
\date{}
\begin{document}
\maketitle

\begin{abstract}

Selecting input features of top relevance has become a popular method for building self-explaining models. 
In this work, we extend this selective rationalization approach to text matching, where the goal is to jointly select and align text pieces, such as tokens or sentences, as a justification for the downstream prediction.
Our approach employs optimal transport (OT) to find a minimal cost alignment between the inputs.
However, directly applying OT often produces dense and therefore uninterpretable alignments.
To overcome this limitation, we introduce novel constrained variants of the OT problem that result in highly sparse alignments with controllable sparsity.
Our model is end-to-end differentiable using the Sinkhorn algorithm for OT and can be trained without any alignment annotations.
We evaluate our model on the StackExchange, MultiNews, e-SNLI, and MultiRC datasets.
Our model achieves very sparse rationale selections with high fidelity while preserving prediction accuracy compared to strong attention baseline models.\textsuperscript{\textdagger}
\blfootnote{\textsuperscript{*}Denotes equal contribution.}
\blfootnote{\textsuperscript{\textdagger}Our code is publicly available at \url{https://github.com/asappresearch/rationale-alignment}.}
\end{abstract}

\section{Introduction}

The growing complexity of deep neural networks has given rise to the desire for self-explaining models~\cite{li2016understanding,ribeiro2016should,zhang2016rationale,ross2017right,sundararajan2017axiomatic,davidselfexplain,chen2018learning}. %
In text classification, for instance, one popular method is to design models that can perform classification using only a \textit{rationale}, which is a subset of the text selected from the model input that fully explains the model's prediction~\cite{Lei_2016,bastings2019interpretable,chang19}.
This \textit{selective rationalization} method, often trained to choose a small yet sufficient number of text spans, makes it easy to interpret the model's prediction by examining the selected text.

In contrast to classification, very little progress has been made toward rationalization for text matching models.
The task of text matching encompasses a wide range of downstream applications, such as similar document recommendation~\cite{dos-santos-etal-2015-learning}, question answering~\cite{lee2019latent}, and fact checking~\cite{thorne2018fever}.
Many of these applications can benefit from selecting and comparing information present in the provided documents.
For instance, consider a similar post suggestion in a tech support forum as shown in Figure~\ref{fig:alignment}.
The extracted rationales could provide deeper insights for forum users while also helping human experts validate and improve the model.

\begin{figure}[!t]
    \centering
    \includegraphics[width=3.0in]{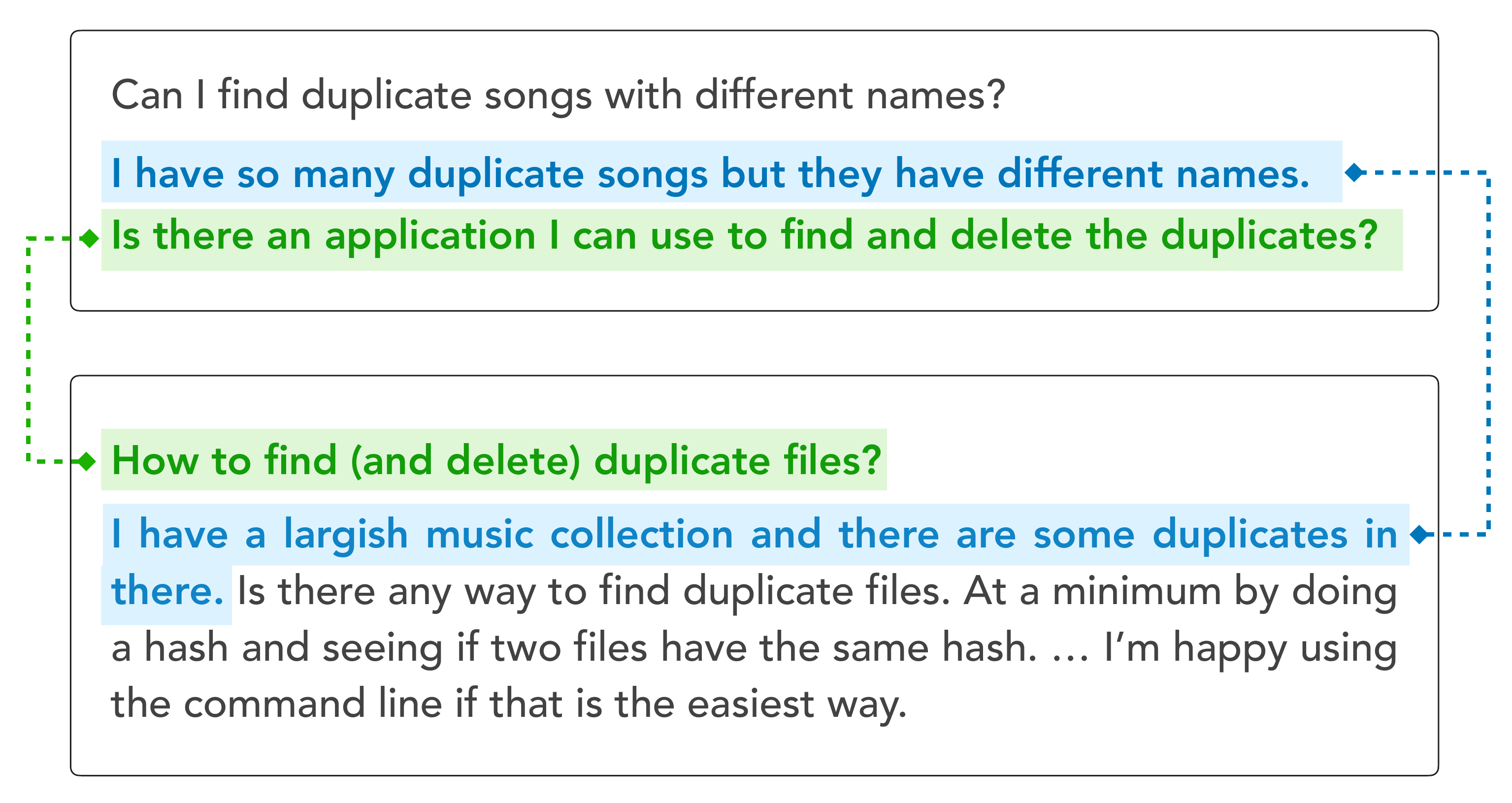}
    \caption{An illustration of a text matching rationale for detecting similar forum posts.}
    \label{fig:alignment}
\end{figure}

In this work, we extend selective rationalization for text matching and focus on two new challenges that are not addressed in previous rationalization work.
First, since text matching is fundamentally about comparing two text documents, rationale selection should be jointly modeled and optimized for matching.
Second, the method should produce an interpretable alignment between the selected rationales showcasing their relations for the downstream prediction.
This is very different from rationalization for text classification, where the selection is performed independently on each input text and an alignment between rationales is unnecessary.

One popular method for aligning inputs is attention-based models~\cite{bahdanau2014neural, rocktaschel2015reasoning,rush2015neural,showattend,kim2017structured}.
However, a limitation of neural attention is that the alignment is rarely sparse, thus making it difficult to interpret how the numerous relations among the text spans lead to the model’s prediction.
Recent work has explored sparse variants of attention~\cite{martins2016softmax,niculae2017regularized,Lin_2018,Malaviya_2018,niculae2018sparsemap}, but the number of non-zero alignments can still be large~\cite{laha2018controllable}.
Additionally, because of the heavy non-linearity following most attention layers, it is difficult to truly attribute the model's predictions to the alignment, which means that attention-based models lack fidelity.

We propose to address these challenges by directly learning sparse yet sufficient alignments using optimal transport (OT).
We use OT as a building block within neural networks for determining the alignment, providing a deeper mathematical justification for the rationale selection.
In order to produce more interpretable rationales, we construct novel variants of OT that have provable and controllable bounds on the sparsity of the alignments.
Selecting and aligning text spans can be jointly optimized within this framework, resulting in optimal text matchings.
Our model is fully end-to-end differentiable using the Sinkhorn algorithm~\cite{cuturi} for OT and can be used with any neural network architecture.

We evaluate our proposed methods on the StackExchange, MultiNews~\cite{multinews}, e-SNLI~\cite{esnli}, and MultiRC~\cite{multirc} datasets, with tasks ranging from similar document identification to reading comprehension.
Compared to other neural baselines, our methods show comparable task performance while selecting only a fraction of the number of alignments.
We further illustrate the effectiveness of our method by analyzing how faithful the model's predictions are to the selected rationales and by comparing the rationales to human-selected rationales provided by  \citet{deyoung2019eraser} on the e-SNLI and MultiRC datasets.

\section{Related Work}

\paragraph{Selective Rationalization.} 
Model interpretability via selective rationalization has attracted considerable interest recently~\cite{Lei_2016,li2016understanding,chen2018learning,chang19}.
Some recent work has focused on overcoming the challenge of learning in the selective rationalization regime, such as by enabling end-to-end differentiable training~\cite{bastings2019interpretable} or by regularizing to avoid performance degeneration~\cite{yu2019}.
Unlike these methods, which perform independent rationale selection on each input document, we extend selective rationalization by jointly learning selection and alignment, as it is better suited for text matching applications.

Concurrent to this work, \citet{deyoung2019eraser} introduce the ERASER benchmark datasets with human-annotated rationales along with several rationalization models. Similarly to \citet{deyoung2019eraser}, we measure the faithfulness of selected rationales, but our work differs in that we additionally emphasize sparsity as a necessary criterion for interpretable alignments.

\paragraph{Alignment.} Models can be made more interpretable by requiring that they explicitly align related elements of the input representation. In NLP, this is often achieved via neural attention~\cite{bahdanau2014neural,chen2015abc,rush2015neural,cheng2016long,decomposableattention,attention1}. 
Many variants of attention, such as temperature-controlled attention~\cite{Lin_2018} and sparsemax~\cite{martins2016softmax}, have been proposed to increase sparsity within the attention weights.
However, it is still debatable whether attention scores are truly explanations~\cite{notattention,notnotattention}.
Distance-based methods of aligning text have also been proposed~\cite{quantum_alignment}, but they similarly cannot guarantee sparsity or explainability.
In this work, we explicitly optimize rationale selection and alignment as an integral part of the model and evaluate the degree to which the alignment explains the model's predictions.

\paragraph{Optimal Transport.} The field of optimal transport (OT) began with \citet{monge}, who explored the problem of determining a minimal cost assignment between sets of equal sizes.
\citet{kantorovich} relaxed Monge's problem to that of determining an optimal transport plan for moving probability mass between two probability distributions.
Since the introduction of a differentiable OT solver by \citet{cuturi}, OT has seen many applications in deep learning and NLP, such as topic embedding ~\cite{ottm}, text generation ~\cite{ottextgeneration}, cross-lingual word embedding alignment \cite{Alvarez_Melis_2018}, graph embedding ~\cite{otgraph}, and learning permutations ~\cite{ot_perm}.
\citet{computational_ot} provides an overview of the computational aspects of OT.
Unlike prior work, we develop novel additional constraints on the OT problem that produce particularly sparse and interpretable alignments.

\section{Problem Formulation}
\label{sec:text_matching}

Consider two related text documents $D^x$ and $D^y$.
These documents are broken down into two sets of text spans, $S^x$ and $S^y$, where the text spans can be words, sentences, paragraphs, or any other chunking of text.
These text spans are then mapped to vector representations using a function $g(\cdot)$ (e.g., a neural network), which produces two sets of vectors representing the inputs, $X = \{ \mathbf{x}_i \}_{i=1}^n = \{ g(S_i^x) \}_{i=1}^n$ and $Y = \{ \mathbf{y}_i \}_{i=1}^m = \{ g(S_i^y) \}_{i=1}^m$, where $\mathbf{x}_i, \mathbf{y}_i \in \mathbb{R}^d$.

We define an \textit{interpretable text matching} as an alignment between the text spans in $X$ and $Y$ that explains the downstream prediction.
Following common practice for previous self-explaining models~\cite{Lei_2016,davidselfexplain}, we specify that a desirable model must produce alignments satisfying the following criteria of interpretability.

\paragraph{Explicitness.}
The alignment between text spans generated by the model should be an observable and understandable component of the model.
Our model explicitly encodes the alignment between $X$ and $Y$ as a matrix $\mathbf{P} \in \mathbb{R}_+^{n \times m}$ where $\mathbf{P}_{i,j}$ indicates the degree to which $\mathbf{x}_i$ and $\mathbf{y}_j$ are aligned. 

\paragraph{Sparsity.}
In order for the alignment to be interpretable, the alignment matrix $\mathbf{P}$ must be sparse, meaning there are very few non-zero alignments between the text spans.
A sparser alignment is easier to interpret as fewer alignments between text spans need to be examined.

\paragraph{Faithfulness.}
An interpretable text matching is only meaningful if the model's predictions are faithful to the alignment, meaning the predictions are directly dependent on it.
Similarly to previous work, our model achieves faithfulness by using only the selected text spans (and their representations) for prediction.
That is, the selected rationales and alignment should be \textit{sufficient} to make accurate predictions.
In addition to sufficiency, faithfulness also requires that the model output should be easily \textit{attributed} to the choice of alignment\footnote{For example, a linear model achieves strong attribution because the importance of each input feature is a constant parameter.}.
For simple attribution, we define our model output as either a linear function of the alignment $\mathbf{P}$ or a shallow feed-forward network on top of $\mathbf{P}$.

In the following sections, we introduce optimal transport as a method to produce interpretable text matchings satisfying all three desiderata.

\section{Background: Optimal Transport}
\label{sec:ot}

An instance of the discrete optimal transport problem consists of two point sets, $X = \{ \mathbf{x}_i \}_{i=1}^n$ and $Y = \{ \mathbf{y}_i \}_{i=1}^m$, with $\mathbf{x}_i, \mathbf{y}_i \in \mathbb{R}^d$. Additionally, $X$ and $Y$ are associated with probability distributions $\mathbf{a} \in \Sigma_n$ and $\mathbf{b} \in \Sigma_m$, respectively, where $\Sigma_n$ is the probability simplex $\Sigma_n \coloneqq \left \{ \mathbf{p} \in \mathbb{R}_+^n : \sum_{i=1}^n \mathbf{p}_i = 1 \right \}$.
A cost function $c(\mathbf{x},\mathbf{y}) : \mathbb{R}^d \times \mathbb{R}^d \to \mathbb{R}$ specifies the cost of aligning a pair of points $\mathbf{x}$ and $\mathbf{y}$. The costs of aligning all pairs of points are summarized by the cost matrix $\mathbf{C} \in \mathbb{R}^{n \times m}$, where $\mathbf{C}_{i,j} = c(\mathbf{x}_i, \mathbf{y}_j)$.

The goal of optimal transport is to compute a mapping that moves probability mass from the points of $X$ (distributed according to $\mathbf{a}$) to the points of $Y$ (distributed according to $\mathbf{b}$) so that the total cost of moving the mass between points is minimized according to the cost function $c$.
This mapping is represented by a transport plan, or alignment matrix, $\mathbf{P} \in \mathbb{R}_+^{n \times m}$, where $\mathbf{P}_{i,j}$ indicates the amount of probability mass moved from $\mathbf{x}_i$ to $\mathbf{y}_j$.
The space of valid alignment matrices is the set
\begin{equation*} 
\begin{split}
    \mathbf{U}(\mathbf{a}, \mathbf{b}) \coloneqq \{ \mathbf{P} \in \mathbb{R}_+^{n \times m} : \mathbf{P} \mathbbm{1}_m = \mathbf{a}, \mathbf{P}^{\text{T}} \mathbbm{1}_n = \mathbf{b} \}
\end{split}
\end{equation*}
since $\mathbf{P}$ must marginalize out to the corresponding probability distributions $\mathbf{a}$ and $\mathbf{b}$ over $X$ and $Y$.

Under this formulation, the optimal transport problem is to find the alignment matrix $\mathbf{P}$ that minimizes the sum of costs weighted by the alignments:
\begin{equation*}
    \label{eq:optimal_coupling}
    L_\mathbf{C}(\mathbf{a}, \mathbf{b}) \coloneqq \underset{\mathbf{P} \in \mathbf{U}(\mathbf{a}, \mathbf{b})}{\mathrm{min}} \langle \mathbf{C}, \mathbf{P} \rangle = \sum_{i,j} \mathbf{C}_{i,j} \mathbf{P}_{i,j}\, .
\end{equation*}
Note that this optimization is a linear programming problem over the convex set $\mathbf{U}(\mathbf{a},\mathbf{b})$.
As a result, one of the extreme points of $\mathbf{U}(\mathbf{a}, \mathbf{b})$ must be an optimal solution.

\subsection{Sparsity Guarantees}

Optimal transport is known to produce alignments that are especially sparse.
In particular, the following propositions characterize the extreme point solution $\mathbf{P}^*$ of $L_\mathbf{C}(\mathbf{a},\mathbf{b})$ and will be important in designing interpretable alignments in Section \ref{sec:ot_constrained}.
\begin{proposition}[\citet{brualdi}, Thm. 8.1.2]
    \label{prop:sparsity}
    Any extreme point $\mathbf{P}^*$ that solves $L_\mathbf{C}(\mathbf{a}, \mathbf{b})$ has at most $n + m - 1$ non-zero entries.
\end{proposition}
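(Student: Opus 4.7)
The plan is to prove this as a consequence of the general theory of basic feasible solutions (BFS) for linear programs. Specifically, I would cast the set $\mathbf{U}(\mathbf{a}, \mathbf{b})$ as the feasible region of an LP in standard form, identify the rank of its equality constraint matrix, and invoke the correspondence between extreme points and BFSs.

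First I would vectorize the transport plan $\mathbf{P}$ as $\mathbf{p} \in \mathbb{R}^{nm}$ and rewrite the marginal constraints $\mathbf{P}\mathbbm{1}_m = \mathbf{a}$ and $\mathbf{P}^{\mathrm{T}}\mathbbm{1}_n = \mathbf{b}$ as a single system $A\mathbf{p} = \mathbf{d}$ with $\mathbf{p} \geq 0$, where $A \in \mathbb{R}^{(n+m) \times nm}$ has $0/1$ entries recording which coordinates of $\mathbf{p}$ appear in each row-sum and each column-sum constraint, and $\mathbf{d} = (\mathbf{a}; \mathbf{b}) \in \mathbb{R}^{n+m}$. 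Next I would show that $\mathrm{rank}(A) = n + m - 1$. The rank is at most $n+m-1$ because summing the first $n$ rows of $A$ yields $\mathbbm{1}_{nm}^{\mathrm{T}}$, as does summing the last $m$ rows, giving exactly one linear dependency (which, on the right-hand side, is the feasibility requirement $\mathbbm{1}_n^{\mathrm{T}}\mathbf{a} = \mathbbm{1}_m^{\mathrm{T}}\mathbf{b}$). Linear independence of any $n+m-1$ rows can be verified by a short direct argument, for instance by exhibiting $n+m-1$ columns whose restriction to those rows is an upper-triangular $0/1$ matrix (using an "L"-shaped support that connects every row and every column vertex exactly once).

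Then I would apply the standard LP fact that a point $\mathbf{p}^*$ in the polytope $\{\mathbf{p} \geq 0 : A\mathbf{p} = \mathbf{d}\}$ is an extreme point if and only if the columns of $A$ indexed by the positive entries of $\mathbf{p}^*$ are linearly independent. Since $A$ has rank $n+m-1$, no more than $n+m-1$ of its columns can be linearly independent, so $\mathbf{p}^*$ has at most $n+m-1$ non-zero coordinates; re-folding $\mathbf{p}^*$ back into $\mathbf{P}^*$ yields the claim. Existence of such an optimal extreme point follows from the fact that the linear objective $\langle \mathbf{C}, \mathbf{P}\rangle$ attains its minimum on the compact polytope $\mathbf{U}(\mathbf{a},\mathbf{b})$ at a vertex.

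The main obstacle is the BFS/extreme point correspondence, which though textbook needs a careful argument for self-containment: if the columns of $A$ at the support of $\mathbf{p}^*$ were linearly dependent, there would exist a nonzero $\mathbf{v} \in \ker A$ supported on that set, and then $\mathbf{p}^* \pm \varepsilon \mathbf{v}$ would both remain in $\mathbf{U}(\mathbf{a},\mathbf{b})$ for all sufficiently small $\varepsilon > 0$, contradicting extremality of $\mathbf{p}^*$. I would write this contradiction step explicitly and keep the rank computation of $A$ self-contained, so the $n+m-1$ bound is fully transparent rather than invoked as a black box.
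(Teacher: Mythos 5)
Your proof is correct. Note that the paper itself offers no proof of this proposition---it is invoked as a black-box citation to Brualdi's Theorem 8.1.2---so your argument is a genuine addition rather than a rederivation of something in the text. Your route is the standard linear-programming one: the constraint matrix $A$ is the vertex--edge incidence matrix of the complete bipartite graph $K_{n,m}$, its rank is $n+m-1$ (one dependency because the row-sum block and the column-sum block each add up to $\mathbbm{1}_{nm}^{\mathrm{T}}$, and full rank otherwise via a spanning-tree / ``L-shaped'' column selection), and the basic-feasible-solution characterization bounds the support of any extreme point by the rank. All three ingredients are right, and your perturbation argument for the extreme-point/BFS correspondence ($\mathbf{p}^* \pm \varepsilon\mathbf{v}$ with $\mathbf{v} \in \ker A$ supported on $\mathrm{supp}(\mathbf{p}^*)$) is exactly the step that needs to be made explicit. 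For comparison, the classical combinatorial proof in Brualdi's book phrases the same content graph-theoretically: the support of an extreme point, viewed as an edge set of $K_{n,m}$, can contain no cycle (a cycle yields an alternating $\pm\varepsilon$ perturbation, which is precisely your kernel vector $\mathbf{v}$), hence is a forest on $n+m$ vertices and has at most $n+m-1$ edges. The two arguments are equivalent---linear independence of incidence-matrix columns is the same as acyclicity of the corresponding edges---but the graph version avoids computing $\mathrm{rank}(A)$ explicitly, while yours slots directly into the general LP machinery and also delivers the existence of an optimal vertex for free. One cosmetic remark: your argument bounds the support of \emph{every} extreme point of $\mathbf{U}(\mathbf{a},\mathbf{b})$, not just the optimal ones, which is slightly stronger than the statement and worth saying explicitly.
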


\begin{proposition}[\citet{birkhoff}]
    \label{prop:permutation}
    If $n=m$ and $\mathbf{a} = \mathbf{b} = \mathbbm{1}_n / n$, then every extreme point of $\mathbf{U}(\mathbf{a}, \mathbf{b})$ is a permutation matrix.
\end{proposition}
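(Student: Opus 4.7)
The plan is to analyze the bipartite support graph of a purported extreme point and use the double-stochastic structure to force that support to be a perfect matching with uniform weights.

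Concretely, fix an extreme point $\mathbf{P}^* \in \mathbf{U}(\mathbf{a},\mathbf{b})$ and build the bipartite graph $G$ with row-vertices $u_1,\dots,u_n$ and column-vertices $v_1,\dots,v_n$, where $(u_i,v_j)$ is an edge iff $\mathbf{P}^*_{i,j} > 0$. The first step is to show $G$ is acyclic. Suppose for contradiction that $G$ contains a cycle; because $G$ is bipartite the cycle has even length $2k$, visiting vertices $u_{i_1}, v_{j_1}, u_{i_2}, v_{j_2}, \dots, u_{i_k}, v_{j_k}, u_{i_1}$. Define a perturbation $\mathbf{E}$ that places $+\varepsilon$ on the odd-indexed cycle edges and $-\varepsilon$ on the even-indexed ones. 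By construction every row and every column of $\mathbf{E}$ sums to $0$, so $\mathbf{P}^* \pm \mathbf{E}$ still have row sums $\mathbf{a}$ and column sums $\mathbf{b}$; choosing $\varepsilon$ smaller than the minimum support value keeps them non-negative, hence in $\mathbf{U}(\mathbf{a},\mathbf{b})$. Then $\mathbf{P}^* = \tfrac{1}{2}(\mathbf{P}^*+\mathbf{E}) + \tfrac{1}{2}(\mathbf{P}^*-\mathbf{E})$ is a strict convex combination, contradicting extremality. (Alternatively one can invoke Proposition~\ref{prop:sparsity} after arguing, via the same perturbation idea, that cycles in $G$ yield redundant active constraints.)

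Next, use the uniform marginals to prune $G$ down to isolated edges. Since $G$ is a forest and every row/column sum is $1/n > 0$, every vertex has degree at least one, so each connected component is a tree with at least one edge. Pick any component $T$ and any leaf in it; say the leaf is a row-vertex $u_i$ with unique neighbor $v_j$. The row constraint forces $\mathbf{P}^*_{i,j} = 1/n$, but the column constraint then says $\sum_{i'} \mathbf{P}^*_{i',j} = 1/n$, so no other entry of column $j$ is positive. Thus $v_j$ has degree one as well, and the component $T$ is the isolated edge $\{u_i,v_j\}$. Doing this for every component yields $n$ disjoint edges covering all $2n$ vertices, i.e.\ a perfect matching encoded by a permutation $\sigma$, with $\mathbf{P}^*_{i,j} = \tfrac{1}{n}\mathbbm{1}[j=\sigma(i)]$, so $\mathbf{P}^*$ is (up to the $1/n$ scaling implicit in the statement) a permutation matrix.

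The routine parts are the bookkeeping for the cycle perturbation and the bipartite-parity observation; the only conceptually delicate step is the cycle-breaking argument, since one must verify that the alternating $\pm\varepsilon$ assignment indeed preserves both marginals and stays non-negative for sufficiently small $\varepsilon$. Once this is in place, the leaf-propagation step is essentially mechanical and the conclusion follows immediately.
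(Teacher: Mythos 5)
Your argument is correct. It is the standard proof of the Birkhoff--von Neumann theorem: the cycle-cancellation perturbation shows the support graph of an extreme point is a forest (each cycle vertex meets exactly one $+\varepsilon$ and one $-\varepsilon$ edge, so both marginals are preserved, and taking $\varepsilon$ below the minimum support value keeps both perturbed matrices feasible, exhibiting $\mathbf{P}^*$ as a midpoint); the leaf-propagation step then uses the uniform marginals to collapse every tree component to a single edge of weight $1/n$. The only point worth flagging is cosmetic: you correctly note that the resulting matrix is $\tfrac{1}{n}$ times a permutation matrix, which is what the paper implicitly means by ``permutation matrix'' in this normalized setting, and your ``say the leaf is a row-vertex'' is a genuine without-loss-of-generality here only because both marginals are uniform, which you should perhaps state explicitly. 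As for comparison with the paper: there is nothing to compare against, since the paper states this proposition as a citation to Birkhoff and gives no proof of its own (it only \emph{uses} the result, e.g.\ in Appendix~\ref{ssec:unique_solution} to identify the extreme points of $\mathbf{U}(\mathbbm{1}_n/n,\mathbbm{1}_n/n)$ with permutation matrices). Your write-up therefore supplies a self-contained elementary proof of a result the paper imports as a black box, and it is consistent with how the paper deploys it.
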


In other words, while the total number of possible aligned pairs is $n\times m$, the optimal alignment $\mathbf{P}^*$ has $\mathcal{O}(n+m)$ non-zero entries.
Furthermore, if $n=m$, then any extreme point solution $\mathbf{P}^*$ is a permutation matrix and thus only has $\mathcal{O}(n)$ non-zero entries.
Figure \ref{fig:alignment_vs_assignment} illustrates two alignments, including one that is a permutation matrix.

Note that the optimal solution of $L_\mathbf{C}(\mathbf{a},\mathbf{b})$ may not be unique in degenerate cases, such as when $\mathbf{C}_{i,j}$ is the same for all $i, j$.
In such degenerate cases, any convex combination of optimal extreme points is a solution.
However, it is possible to modify any OT solver to guarantee that it finds an extreme point (i.e., sparse) solution.
We provide a proof in Appendix~\ref{ssec:unique_solution}, although experimentally we find that these modifications are unnecessary as we nearly always obtain an extreme point solution.

\begin{figure}
    \centering
    \begin{subfigure}[b]{0.45\linewidth}
        \centering
        \includegraphics[width=0.4\textwidth]{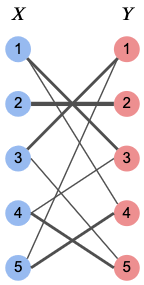}
        \caption{Alignment 1 (graph)}
        \label{fig:alignment_graph}
    \end{subfigure}
    \hfill
    \begin{subfigure}[b]{0.45\linewidth}  
        \centering 
        \includegraphics[width=0.4\textwidth]{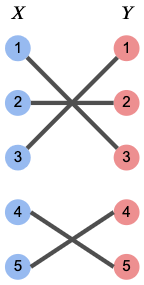}
        \caption{Alignment 2 (graph)}  
        \label{fig:assignment_graph}
    \end{subfigure}
    \vskip\baselineskip
    \begin{subfigure}[b]{0.45\linewidth}   
        \centering 
        \includegraphics[width=0.7\textwidth]{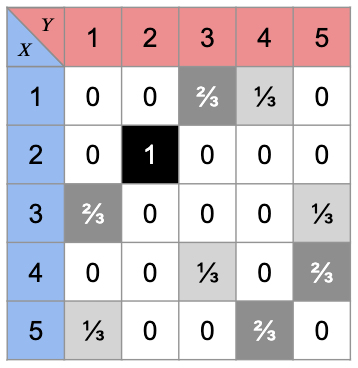}
        \caption{Alignment 1 (matrix)}   
        \label{fig:alignment_matrix}
    \end{subfigure}
    \quad
    \begin{subfigure}[b]{0.45\linewidth}   
        \centering 
        \includegraphics[width=0.7\textwidth]{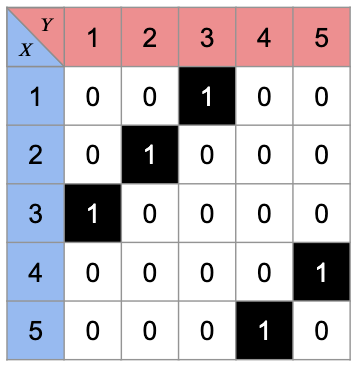}
        \caption{Alignment 2 (matrix)}    
        \label{fig:assignment_matrix}
    \end{subfigure}
    \caption{An illustration of two different alignments between the points of $X$ and $Y$, displayed both as a graph (top) and as an (unnormalized) alignment matrix $\mathbf{P}$ (bottom). Alignment 2 (right) corresponds to the special case where $\mathbf{P}$ is a permutation matrix, which produces an assignment between points in $X$ and $Y$.}
    \label{fig:alignment_vs_assignment}
\end{figure}

\subsection{Sinkhorn Algorithm}
\label{ssec:sinkhorn}

$L_{\mathbf{C}}(\mathbf{a}, \mathbf{b})$ is a linear programming problem and can be solved exactly with interior point methods.
Recently, \citet{cuturi} proposed an entropy-regularized objective that can be solved using a fully differentiable, iterative algorithm, making it ideal for deep learning applications.
Specifically, the entropy-regularized objective is
\begin{equation*}
    \label{eq:entropy_optimal_coupling}
    L_\mathbf{C}^\epsilon(\mathbf{a}, \mathbf{b}) \coloneqq \underset{\mathbf{P} \in \mathbf{U}(\mathbf{a}, \mathbf{b})}{\mathrm{min}} \langle \mathbf{C}, \mathbf{P} \rangle - \epsilon \mathbf{H}(\mathbf{P}),
\end{equation*}
where $\mathbf{H}(\mathbf{P})$ is the entropy of alignment matrix $\mathbf{P}$ and $\epsilon > 0$ controls the amount of entropy regularization. 
In practice, $\epsilon$ can be set sufficiently small such that the solution to $L_{\mathbf{C}}^\epsilon(\mathbf{a}, \mathbf{b})$ is a good approximation of the solution to $L_{\mathbf{C}}(\mathbf{a}, \mathbf{b})$.

Conveniently, $L_{\mathbf{C}}^\epsilon(\mathbf{a}, \mathbf{b})$ has a solution of the form $\mathbf{P}^* = \mathrm{diag}(\mathbf{u})\ \mathbf{K}\ \mathrm{diag}(\mathbf{v})$, where $\mathbf{K} = e^{-\mathbf{C} / \epsilon}$ and $(\mathbf{u}, \mathbf{v}) \in \mathbb{R}_+^n \times \mathbb{R}_+^m$.
The vectors $\mathbf{u}$ and $\mathbf{v}$ can be determined using the Sinkhorn-Knopp matrix scaling algorithm \cite{sinkhorn_knopp}, which iteratively computes
\begin{equation*}
    \mathbf{u} \leftarrow \mathbf{a} \oslash \mathbf{K} \mathbf{v}
    \quad \text{and} \quad 
    \mathbf{v} \leftarrow \mathbf{b} \oslash \mathbf{K}^\mathrm{T} \mathbf{u}
\end{equation*}
where $\oslash$ denotes element-wise division.

Since each iteration consists only of matrix operations, the Sinkhorn algorithm can be used as a differentiable building block in deep learning models.
For instance, in this work we take $\mathbf{C}$ as the distance between hidden representations given by a parameterized neural network encoder.
Our model performs the Sinkhorn iterations until convergence (or a maximum number of steps) and then outputs the alignment $\mathbf{P}$ and the total cost $\left<\mathbf{C}, \mathbf{P}\right>$ as inputs to subsequent components of the model.

\section{Learning Interpretable Alignments}
\label{sec:ot_constrained}

\begin{figure*}
    \centering
    \includegraphics[width=4in]{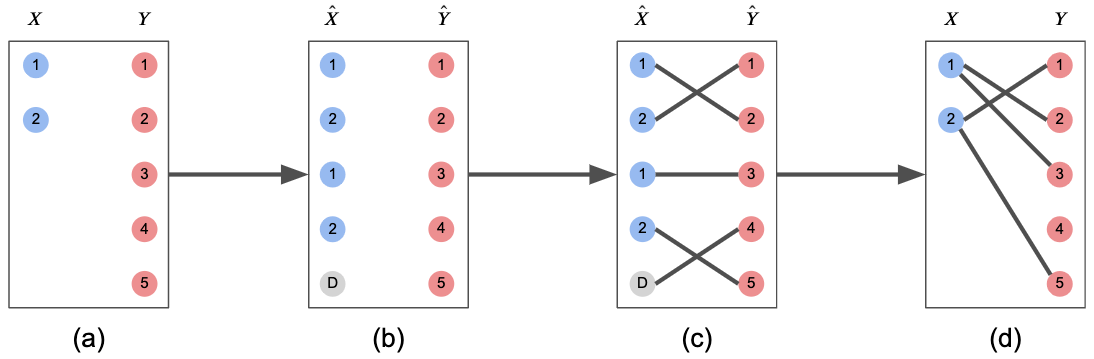}
    \caption{An illustration of the process of computing a one-to-two assignment between the points of $X$ and $Y$. (a) The original points of $X$ and $Y$. (b) $\hat{X}$ and $\hat{Y}$ are constructed so that $\hat{X}$ has two copies of each point in $X$ and one dummy point and $\hat{Y} = Y$. (c) OT is applied to $\hat{X}$ and $\hat{Y}$ using uniform distributions $\mathbf{a}$ and $\mathbf{b}$, which produces a one-to-one assignment between $\hat{X}$ and $\hat{Y}$. (d) A one-to-two assignment between $X$ and $Y$ is extracted from the one-to-one assignment between $\hat{X}$ and $\hat{Y}$.}
    \label{fig:one_to_two_assignment}
\end{figure*}

Using ``vanilla'' OT produces sparse alignments as guaranteed by Proposition~\ref{prop:sparsity}, but the level of sparsity is insufficient to be interpretable.
For instance, Alignment 1 in Figure \ref{fig:alignment_vs_assignment} still has a significant number of non-zero alignment values.
Motivated by this limitation, we propose to encourage greater sparsity and interpretability by constructing OT problems with additional constraints.

\paragraph{General Recipe for Additional Constraints.}
Intuitively, an interpretable alignment should be sparse in two ways.
First, each text span should be aligned to one or a very small number of spans in the other input text. 
Second, the total number of aligned pairs should be small enough so that the alignment can be easily examined by a human.
We modify the OT problem in several ways to guarantee both aspects of sparsity.

We start by forcing the solution to be an \textit{assignment}, which is a one-to-one (or one-to-few) alignment such that every non-zero entry in the alignment matrix is equal, thereby simplifying interpretability.
Alignment 2 in Figure \ref{fig:alignment_vs_assignment} is an example of a one-to-one assignment.
We also consider two other constructions, one that makes every text span in the alignment optional and another that directly limits the total number of aligned pairs.

At the core of our construction are two types of auxiliary points that are added to the input point sets $X$ and $Y$: 
\begin{itemize}
    \item \textbf{Replica points} are exact copies of the original points in $X$ or $Y$ and can be used to control the sparsity of each point's alignment.
    \item \textbf{Dummy points}, also known as tariff-free reservoirs in prior work, are points that can be aligned to with 0 cost.
    Dummy points are used for absorbing unused probability mass in partial transport, where the constraints are relaxed to $\mathbf{P} \mathbbm{1}_m \leq \mathbf{a}$ and $\mathbf{P}^{\text{T}} \mathbbm{1}_n \leq \mathbf{b}$~\cite{caffarelli,figalli}.
\end{itemize}

The idea is to add an appropriate number of replica points and dummy points to create $\hat{X}$ and $\hat{Y}$ with $|\hat{X}| = |\hat{Y}| = N$ for some $N$.
Then by using uniform probability distributions $\mathbf{a} = \mathbf{b} = \mathbbm{1}_N / N$, Proposition \ref{prop:permutation} implies that one of the solutions to the OT problem will be a permutation matrix, i.e., a one-to-one assignment between the points in $\hat{X}$ and $\hat{Y}$.
Since the points of $X$ and $Y$ are included in $\hat{X}$ and $\hat{Y}$, we can directly extract an assignment between $X$ and $Y$ from the assignment between $\hat{X}$ and $\hat{Y}$. Figure \ref{fig:one_to_two_assignment} illustrates the procedure.
Note that the same solution can be attained without explicitly replicating any points by adjusting the probability distributions $\mathbf{a}$ and $\mathbf{b}$, but we use replication for ease of exposition.
Also note that the Sinkhorn algorithm is compatible with replica and dummy points and the model remains differentiable.

We now describe three specific instances of this procedure that produce interpretable assignments with different sparsity patterns.
Without loss of generality, we assume that $n = |X| \leq |Y| = m$.

\paragraph{One-to-$k$ Assignment.}
In this assignment, every point in the smaller set $X$ should map to $k$ points in the larger set $Y$, where $k \in \{1, 2, \dots, \lfloor \frac{m}{n} \rfloor \}$.
This will result in a sparsity of $kn \leq \lfloor \frac{m}{n} \rfloor n \leq m$.

To compute such an assignment, we set $\hat{Y} = Y$ and we construct $\hat{X}$ with $k$ copies of every point in $X$ along with $m - kn$ dummy points.
Since $|\hat{X}| = |\hat{Y}| = m$, applying OT to $\hat{X}$ and $\hat{Y}$ produces a one-to-one assignment between $\hat{X}$ and $\hat{Y}$.
As $\hat{X}$ contains $k$ replicas of each point in $X$, each unique point in $X$ is mapped to $k$ points in $Y$, thus producing a one-to-$k$ assignment.
The remaining $m-kn$ mappings to dummy points are ignored.

\paragraph{Relaxed One-to-$k$ Assignment.}
In a relaxed one-to-$k$ assignment, each point in $X$ can map to \textit{at most} $k$ points in $Y$.
As with the one-to-$k$ assignment, we use $k$ replicas of each point in $X$, but now we add $m$ dummy points to $X$ and $kn$ dummy points to $Y$, meaning $|\hat{X}| = |\hat{Y}| = m + kn$.
Because of the number of replicas, this will produce at most a one-to-$k$ assignment between $X$ and $Y$.
However, since there is now one dummy point in $\hat{Y}$ for every original point in $\hat{X}$, every original point has the option of aligning to a dummy point, resulting in at most $k$ alignments.
Note that in this case, the cost function must take \textit{both} positive and negative values to prevent all original points from mapping to the zero-cost dummy points.

\begin{table}[!t!]
\resizebox{0.47\textwidth}{!}{
\centering
\scriptsize
\begin{tabular} {l|c|c|c|c} 
    \toprule
    Constraint & \#\,R of $X$ & \#\,D in $X'$ & \#\,D in $Y'$ & Sparsity ($s$) \\
    \midrule
    Vanilla & 1 & 0 & 0 & $s \leq n + m - 1$ \\
    One-to-$k$ & $k$ & $m - kn$ & 0 & $s = kn \leq m$ \\
    R one-to-$k$ & $k$ & $m$ & $kn$ & $s \leq kn \leq m$ \\
    Exact-$k$ & 1 & $m - k$ & $n - k$ & $s = k \leq n$ \\
    \bottomrule
\end{tabular}}
\caption{Summary of constrained alignment construction and sparsity. \#\,R is the number of replicas, \#\,D is the number of dummy points, R one-to-$k$ is the relaxed one-to-$k$ assignment, and $n = |X| \leq |Y| = m$.}
\label{table:constraints}
\end{table}

\paragraph{Exact-$k$ Assignment.}
An exact-$k$ assignment maps exactly $k$ points in $X$ to points in $Y$, where $k \leq n$. 
An exact-$k$ assignment can be constructed by adding $m - k$ dummy points to $X$ and $n - k$ dummy points to $Y$, meaning $|\hat{X}| = |\hat{Y}| = n + m - k$.
In this case, the cost function must be strictly positive so that original points map to dummy points whenever possible.
This leaves exactly $k$ alignments between original points in $X$ and $Y$.

\paragraph{Controllable Sparsity.}

Table \ref{table:constraints} summarizes the differences between vanilla OT and the constrained variants.
The freedom to select the type of constraint and the value of $k$ gives fine-grained control over the level of sparsity.
We evaluate the performance of all these variants in our experiments.

\section{Experimental Setup}

\paragraph{Datasets.}
We evaluate our model and all baselines on four benchmarks: two document similarity tasks, MultiNews and StackExchange, and two classification tasks, e-SNLI and MultiRC.
The e-SNLI and MultiRC tasks come from the ERASER benchmark~\cite{deyoung2019eraser}, which was created to evaluate selective rationalization models.
We chose those two datasets as they are best suited for our text matching setup.

StackExchange\footnote{\url{https://stackexchange.com/sites}} is an online question answering platform and has been used as a benchmark in previous work~\cite{dos-santos-etal-2015-learning,shah2018adversarial,perkins-yang-2019-dialog}. 
We took the June 2019 data dumps\footnote{\url{https://archive.org/details/stackexchange}} of the AskUbuntu and SuperUser subdomains of the platform and combined them to form our dataset.

MultiNews~\cite{multinews} is a multi-document summarization dataset where 2 to 10 news articles share a single summary.
We consider every pair of articles that share a summary to be a similar document pair.
Table~\ref{tab:data_stats} shows summary statistics of the two document ranking datasets.

e-SNLI~\cite{esnli} is an extended version of the SNLI dataset~\cite{snli} for natural language inference where the goal is to predict the textual entailment relation (entailment, neutral, or contradiction) between premise and hypothesis sentences.
Human rationales are provided as highlighted words in the two sentences.

MultiRC~\cite{multirc} is a reading comprehension dataset with the goal of assigning a label of true or false to a question-answer pair depending on information from a multi-sentence document. 
We treat the concatenated question and answer as one input and the document as the other input for text matching.
Human rationales are provided as highlighted sentences in the document.

For StackExchange and MultiNews, we split the documents into 80\% train, 10\% validation, and 10\% test, while for e-SNLI and MultiRC, we use the splits from \citet{deyoung2019eraser}.

\begin{table}[!t!]
\centering
\footnotesize
\begin{tabular}{l|c|c} 
\toprule
Metric                       & StackExchange       & MultiNews \\
\midrule
\# docs       &     730,818       &       10,130              \\
\# similar doc pairs   &    187,377     &      22,623      \\
Avg sents per doc    &        3.7       &        31                     \\
Max sents per doc    &      54      &       1,632                     \\
Avg words per doc     &      87      &         680                    \\
Vocab size           &    603,801    &            299,732        \\
\bottomrule
\end{tabular}
\caption{Statistics for the document ranking datasets.}
\label{tab:data_stats}
\end{table}

\begin{figure*}[!t!h]
    \centering
    \includegraphics[width=0.85\textwidth]{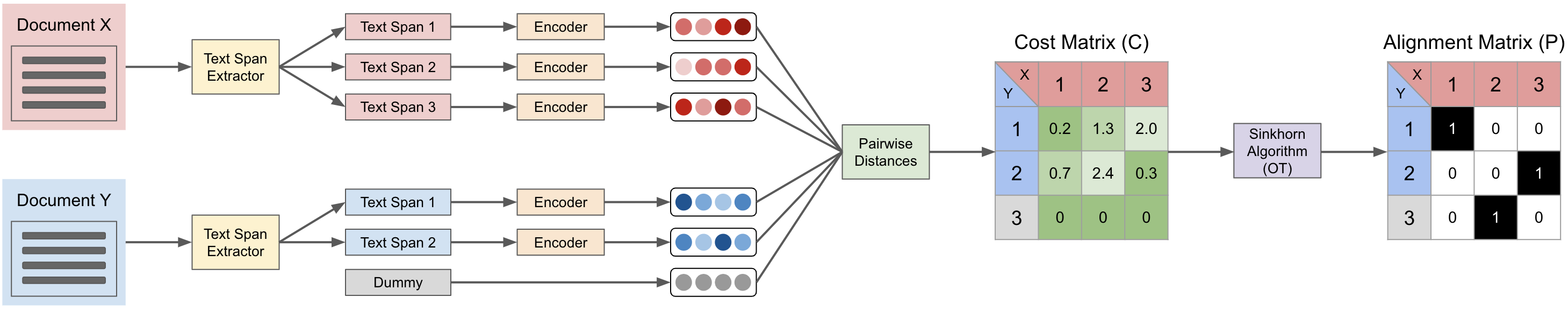}
    \caption{An illustration of our constrained OT model applied to two text documents. The final output of the model depends on a combination of the encodings, the cost matrix, and the alignment matrix.}
    \label{fig:ot_model}
\end{figure*}

\paragraph{Metrics.}
We evaluate models according to the following three criteria.

\begin{enumerate}
    \item \textbf{Sparsity.} To evaluate sparsity, we compute the average percentage of \textit{active} alignments produced by each model, where an alignment is active if it exceeds a small threshold $\lambda$. This threshold is necessary to account for numerical imprecision in alignment values that are essentially zero. We set $\lambda = \frac{0.01}{n \times m}$ unless otherwise specified, where $n$ and $m$ are the number of text spans in the two documents.
    \item \textbf{Sufficiency.} If a model makes a correct prediction given only the rationales, then the rationales are sufficient. We evaluate sufficiency by providing the model only with active alignments and the aligned text representations and by masking non-active inputs (using the threshold $\lambda$).
    \item \textbf{Relevance.} The relevance of rationales is determined by whether a human would deem them valid and relevant. We compute relevance using the token-level F1 scores of model-generated rationales compared to human-selected rationales on the e-SNLI and MultiRC datasets. We also perform a qualitative human evaluation.
\end{enumerate}

\paragraph{Baselines and Implementation Details.}
We use the decomposable attention model \cite{decomposableattention} as our baseline attention model. 
In addition, we compare our model to two attention variants that are designed to encourage sparsity.
The temperature attention variant applies a temperature term $T$ in the softmax operator \cite{Lin_2018}.
The sparse attention variant adopts the sparsemax operator~\cite{martins2016softmax} in place of softmax to produce sparse attention masks.

Our constrained OT model operates as illustrated in Figure~\ref{fig:ot_model}.
After splitting the input documents into sentences, our model independently encodes each sentence and computes pairwise costs between the encoded representations\footnote{For the e-SNLI dataset, where documents are single sentences, we use the contextualized token representations from the output of the sentence encoder following previous work~\cite{thorne2019generating}.}.
Dummy and replica encodings are added as needed for the desired type of constrained alignment.
Our model then applies OT via the Sinkhorn algorithm to the cost matrix $\mathbf{C}$ to produce an optimal alignment matrix $\mathbf{P}$.
For the document ranking tasks, the final score is simply $\left<\mathbf{C}, \mathbf{P}\right>$.
For the classification tasks, we use the alignment $\mathbf{P}$ as a sparse mask to select encoded text representations, and we feed the aggregated representation to a shallow network to predict the output label, similar to our baseline attention models.

For a fair comparison, our models and all baselines use the same neural encoder to encode text spans before the attention or OT operation is applied.
Specifically, we use RoBERTa~\cite{roberta}, a state-of-the-art pre-trained encoder, for the StackExchange and MultiRC dataset. We use use bi-directional recurrent encoders~\cite{lei2017simple} for the MultiNews and e-SNLI datasets\footnote{The input text in the MultiNews dataset is too long for large BERT models. The e-SNLI dataset in ERASER contains human-annotated rationales at the word level while BERT models use sub-word tokenization.}. 
The value of $k$ for the OT constraints is chosen for each dataset by visually inspecting alignments in the validation set, though model performance is robust to the choice of $k$.
In order to compare our models' rationales to human annotations, we use a binary thresholding procedure as described in Appendix \ref{ssec:impl_details}.
We report results averaged over 3 independent runs for each model. Additional implementation details are provided in Appendix~\ref{ssec:impl_details}.

\begin{table*}[!t!]
\centering
\small
\begin{tabular} {l|c|c|c|c|c|c|c|c|c|c} 
\toprule
   & \multicolumn{5}{c|}{ {StackExchange}} & \multicolumn{5}{c}{ {MultiNews} }    \\
\midrule
Model                                                & AUC   & MAP   & MRR   & P@1   & \# Align. & AUC   & MAP   & MRR   & P@1   & \# Align. \\
\midrule
\textrm{OT}                & 98.0 & 91.2 & 91.5 & 86.1 & 8  & 97.5 & {96.8} & {98.1} & {97.2} & 48  \\
\textrm{OT} (1:1)             & 97.7 & 89.7 & 90.0   & 83.9 & 4  & {97.8} & 96.7 & 97.9 & 96.8 & 19  \\
\textrm{OT} (relaxed 1:1) &  97.8 & 88.5 & 88.9 & 81.8 & 3  & 93.1 & 93.2 & 96.0   & 94.1 & 19  \\
\textrm{OT} (exact $k$) & {98.1} & {92.3} & {92.5} & {87.8} & {2}  & 96.4 & 96.3 & 97.7 & 96.6 & {6}   \\
\midrule
\textrm{Attention}                  & {98.2} & 92.4 & 92.5 & 88.0   & 23 & 97.8 & 96.4 & 97.6 & 96.3 & 637 \\
\textrm{Attention} ($T=0.1$)          & 98.2 & 92.4 & 92.5 & 87.7 & 22 & 98.0   & 97.0   & 98.1 & 97.1 & 634 \\
\textrm{Attention} ($T=0.01$)         & 97.9 & 89.7 & 89.9 & 83.5 & {8} & 97.9 & 96.9 & 98.0   & 97.0   & 594  \\
\textrm{Sparse Attention}        & 98.0   & {92.5} & {92.6} & {88.3} & 19  & {98.2} & {97.7} & {98.1} & {97.1} & 330  \\
\bottomrule
\end{tabular}
\caption{Performance of all models on the StackExchange and MultiNews datasets. We report ranking results and the average number of active alignments (\# Align.) used. For our method with the exact $k$ alignment constraint, we set $k=2$ for StackExchange and $k=6$ for MultiNews, respectively.}
\label{table:results}
\end{table*}

\begin{figure}[!t!]
    \centering
    \includegraphics[width=0.95\linewidth]{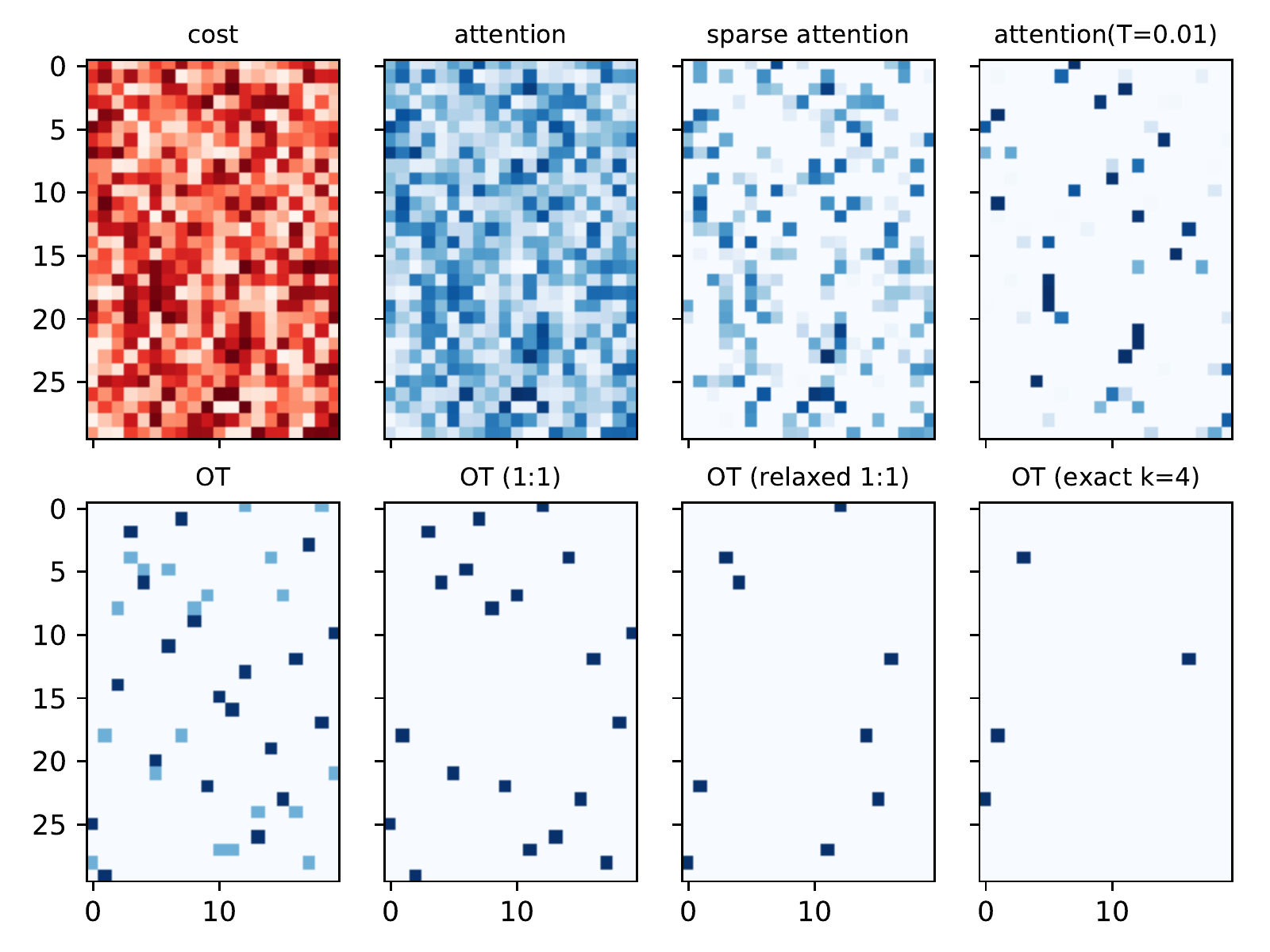}
    \caption{Attention or alignment heatmaps generated by different methods on a synthetic $30\times 20$ cost matrix.} %
    \label{fig:heatmap}
\end{figure}

\section{Results}

\paragraph{Synthetic Visualizations.}
Before experimenting with the datasets, we first analyze the alignments obtained by different methods on a synthetic cost matrix in Figure~\ref{fig:heatmap}.
As shown in the figure, all attention baselines struggle to produce sufficiently sparse alignments, even with the use of a small temperature or the sparsemax operator.
In contrast, our methods are very sparse, as a result of the provable sparsity guarantees of the constrained alignment problem. 
For instance, the relaxed one-to-$k$ assignment produces fewer active alignments than either the number of rows or columns, and the exact-$k$ assignment finds exactly $k=4$ alignments.

\paragraph{StackExchange \& MultiNews.}
Table~\ref{table:results} presents the results of all models on the StackExchange and MultiNews datasets.
We report standard ranking and retrieval metrics including area under the curve (AUC), mean average precision (MAP), mean reciprocal rank (MRR), and precision at 1 (P@1).
The results highlight the ability of our methods to obtain high interpretability while retaining ranking performance comparable to strong attention baselines.
For example, our model is able to use only 6 aligned pairs to achieve a P@1 of 96.6 on the MultiNews dataset.
In comparison, the sparse attention model obtains a P@1 of 97.1 but uses more than 300 alignment pairs and is thus difficult to interpret.
Model complexity and speed on the StackExchange dataset are reported in Table \ref{table:size_and_speed} in Appendix \ref{ssec:impl_details}.

\begin{table*}[th]
\centering
\small
\begin{tabular}{l|c|c|c|@{~~~~}c@{~~~~}|c|c}
\toprule
Model & Accuracy & Task F1 & \% Token & Premise F1 & Hypothesis F1 & P\&H F1 \\
\midrule
OT (relaxed 1:1)         &  82.4  & 82.4  & 69.1 & 25.1 & 43.7 & 34.6 \\
OT (exact $k=4$)       &  81.4 & 81.4 & 38.7 & 24.3 & 45.0 & 35.4   \\
OT (exact $k=3$)       &  81.3 & 81.4   & 29.6 & 28.6 & 50.0 & 39.8   \\
OT (exact $k=2$)       &  81.3 & 81.3 & 21.6 & 24.8 & 30.6 & 27.8 \\
\midrule
Attention         & 76.3 (82.1) & 76.2 & 37.9 & 26.6 & 37.6 & 32.2 \\
Attention ($T=0.1$)    & 73.9 (81.5) & 73.9 & 33.0 & 28.4 & 44.1 & 36.5 \\
Attention ($T=0.01$)    & 70.2 (81.4) & 69.9 & 30.6 & 26.1 & 38.0 & 32.2   \\
Sparse Attention     & 63.5 (75.0) & 63.1  & 12.5  & 8.8  & 24.5 & 17.2 \\
\midrule
\citet{thorne2019generating} & - (81.0) & - & - & 22.2 & 57.8 & - \\
\textsuperscript{\textdagger}\citet{Lei_2016} & - & 90.3 &  - & - & - & 37.9 \\
\toprule
\textsuperscript{\textdagger}\citet{Lei_2016} (+S) & - & 91.7 & - & - & - & 69.2 \\
\textsuperscript{\textdagger}Bert-To-Bert (+S) & - & 73.3 & - & - & - & 70.1 \\
\bottomrule
\end{tabular}
\caption{e-SNLI accuracy, macro-averaged task F1, percentage of tokens in active alignments, and token-level F1 of the model-selected rationales compared to human-annotated rationales for the premise, hypothesis, and both (P\&H F1).
Accuracy numbers in parentheses use all attention weights, not just active ones.
(+S) denotes supervised learning of rationales. \textsuperscript{\textdagger} denotes results from \citet{deyoung2019eraser}.
}
\label{table:snli}
\end{table*}

\paragraph{e-SNLI.}
Table~\ref{table:snli} shows model performance on the e-SNLI dataset. 
As with document similarity ranking, we evaluate classification accuracy when the model uses only the active alignments.
This is to ensure faithfulness, meaning the model truly and exclusively uses the rationales to make predictions.
Since attention is not explicitly trained to use only active alignments, we also report the accuracy of attention models when using all attention weights.

As shown in the table, the accuracy of attention methods decreases significantly when we remove attention weights other than those deemed active by the threshold $\lambda$.
In contrast, our model retains high accuracy even with just the active alignments since sparsity is naturally modeled in our contrained optimal transport framework.
Figure~\ref{fig:snli_tradeoff} visualizes the change to model accuracy when different proportions of tokens are selected by the models. 

\begin{figure}[!t!]
    \centering
    \includegraphics[width=0.95\linewidth]{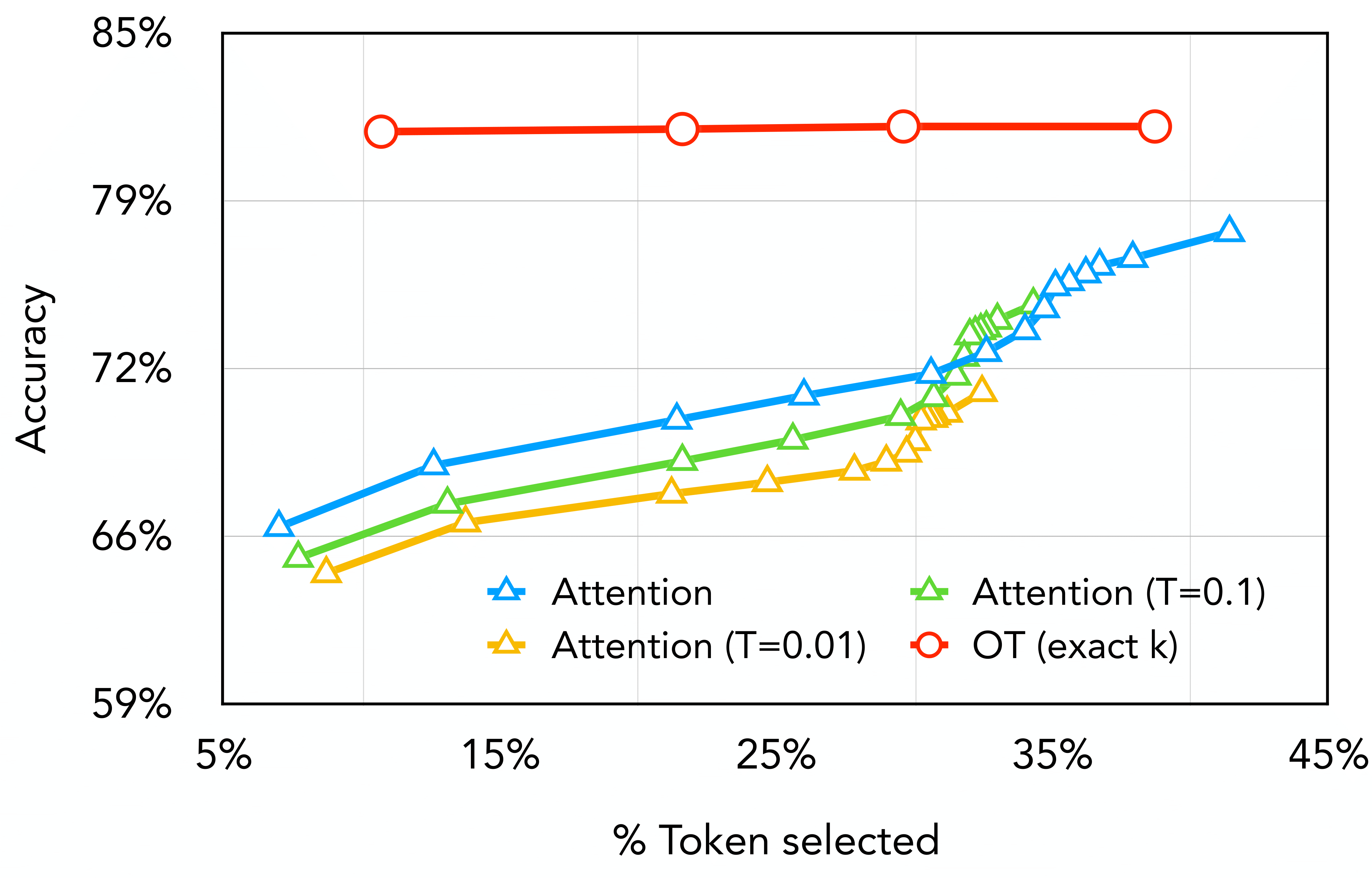}
    \caption{Model accuracy on the e-SNLI dataset when using different percentages of tokens as rationales. The attention model values are obtained using different thresholds $\lambda$ to clip the attention weights while the values for our exact-$k$ model correspond to $k=1, 2, 3, 4$.} %
    \label{fig:snli_tradeoff}
\end{figure}

Table~\ref{table:snli} also presents the token-level F1 scores for the models' selected rationales compared to human-annotated rationales.
Note that the rationale annotations for this task are designed for token selection rather than alignment and are sometimes only on one of the input sentences.
Nevertheless, our model obtains F1 scores on par with recent work~\cite{deyoung2019eraser,thorne2019generating}.

\paragraph{MultiRC.}
Table~\ref{table:multirc} presents the results on the MultiRC dataset.
Compared to attention models, our OT-based models achieve similar task performance with a higher rationale F1 score, despite selecting fewer rationales. 
The model variants from \citet{deyoung2019eraser} in general achieve higher task F1 performance.
However, their unsupervised model suffers from degeneration due to the challenges of end-to-end training without rationale supervision.

We also create supervised versions of our models that learn from the human-annotated rationales during training.
These supervised models achieve comparable task performance to and better rationale F1 scores than models from \citet{deyoung2019eraser}, demonstrating the strength of a sparse rationale alignment.
Supervised training details can be found in Appendix \ref{ssec:impl_details}.

\begin{table}[!t!]
\resizebox{0.47\textwidth}{!}{
\centering
\small
\begin{tabular}{l|c|c|c}
\toprule
Model & Task F1 & \% Token & R. F1 \\
\midrule
OT (1:1)        &  62.3    & 21.6   & 33.7   \\
OT (relaxed 1:1)         &  62.0   & 23.1   & 32.1   \\
OT (relaxed 1:2)         &  62.2    & 24.0   & 35.9   \\
OT (exact $k=2$)         &  62.5    & 25.8   & 34.7   \\
OT (exact $k=3$)         &  62.0    & 24.6   & 37.3   \\
\midrule
Attention                &  62.6    & 44.7  & 21.3  \\
Attention ($T=0.1$)      & 62.6    & 34.7  & 18.2   \\
Attention ($T=0.01$)     & 62.7    & 30.1  & 17.3    \\
Sparse Attention         & 59.3    & 31.3  & 21.2   \\
\midrule
\textsuperscript{\textdagger}\citet{Lei_2016}     & 64.8 & - & 0.0  \\
\toprule
OT (1:1) (+S)     &  61.5    & 19.0   & 50.0     \\
OT (relaxed 1:1) (+S)         &  60.6   & 19.4   & 45.4   \\
OT (relaxed 1:2) (+S)         &   61.5    & 28.7   & 46.8   \\
OT (exact $k=2$) (+S)     &  61.0    & 18.9   & 51.3     \\
OT (exact $k=3$) (+S)    &   60.9   & 23.1   & 49.3   \\
\midrule
\textsuperscript{\textdagger}\citet{Lei_2016} (+S)  & 65.5 & - & 45.6  \\
\textsuperscript{\textdagger}\citet{lehman} (+S)    & 61.4 & - & 14.0  \\
\textsuperscript{\textdagger}Bert-To-Bert (+S)        & 63.3 & - & 41.2  \\
\bottomrule
\end{tabular}}
\caption{MultiRC macro-averaged task F1, percentage of tokens used in active alignments, and token-level F1 of the model-selected rationales compared to human-annotated rationales (R. F1).
(+S) denotes supervised learning of rationales.
\textsuperscript{\textdagger} denotes results from \citet{deyoung2019eraser}. %
\label{table:multirc}
}
\end{table}

\paragraph{Qualitative Studies.}

We performed a human evaluation on documents from StackExchange that reveals that our model's alignments are preferred to attention.
The results of the human evaluation, along with examples of StackExchange and e-SNLI alignments, are provided in Appendix \ref{ssec:qualitative_study}.

\section{Conclusion}
Balancing performance and interpretability in deep learning models has become an increasingly important aspect of model design.
In this work, we propose jointly learning interpretable alignments as part of the downstream prediction to reveal how neural network models operate for text matching applications. 
Our method extends vanilla optimal transport by adding various constraints that produce alignments with highly controllable sparsity patterns, making them particularly interpretable. 
Our models show superiority by selecting very few alignments while achieving text matching performance on par with alternative methods.
As an added benefit, our method is very general in nature and can be used as a differentiable hard-alignment module in larger NLP models that compare two pieces of text, such as sequence-to-sequence models.
Furthermore, our method is agnostic to the underlying nature of the two objects being aligned and can therefore align disparate objects such as images and captions, enabling a wide range of future applications within NLP and beyond.

\section*{Acknowledgments}

We thank Jesse Michel, Derek Chen, Yi Yang, and the anonymous reviewers for their valuable discussions. We thank Sam Altschul, Derek Chen, Amit Ganatra, Alex Lin, James Mullenbach, Jen Seale, Siddharth Varia, and Lei Xu for providing the human evaluation.

\bibliography{refer}
\bibliographystyle{acl_natbib}

\appendix

\begin{center}
    \large{\textbf{Appendix}}
    \label{sec:supplemental}
\end{center}

\section{Qualitative Study}
\label{ssec:qualitative_study}

\paragraph{Human Evaluation.}

We performed a human evaluation of rationale quality on the StackExchange dataset.
We asked $8$ annotators to rate $270$ rationale examples selected from three models including OT (exact $k=2$), Attention ($T=0.01$), and Sparse Attention. 
For each example, we presented the human annotator with a pair of similar documents along with the extracted alignment rationales.
The annotator then assigned a score of 0, 1, or 2 for each of the following categories: redundancy, relevance, and overall quality.
A higher score is always better (i.e., less redundant, more relevant, higher overall quality).
For attention-based models, we selected the top 2 or 3 aligned pairs (according to the attention weights) such that the number of pairs is similar to that of the OT (exact $k=2$) model.
The results are shown in Figure~\ref{fig:human_eval}.
Attention models have more redundancy as well as higher relevance. This is not surprising since selecting redundant alignments can result in fewer mistakes. 
In comparison, our OT-based model achieves much less redundancy and a better overall score.

\paragraph{Example Rationales.}
Figure~\ref{fig:examples} shows examples of rationales generated from our OT (exact $k=2$) model on the StackExchange dataset. 
Our extracted rationales effectively identify sentences with similar semantic meaning and capture the major topics in the AskUbuntu subdomain. 
Figure~\ref{fig:examples_snli} similarly shows example rationales on the e-SNLI dataset.

\begin{figure}[!t]
    \centering
    \begin{subfigure}[b]{\linewidth}
        \centering
        \includegraphics[width=0.8\textwidth]{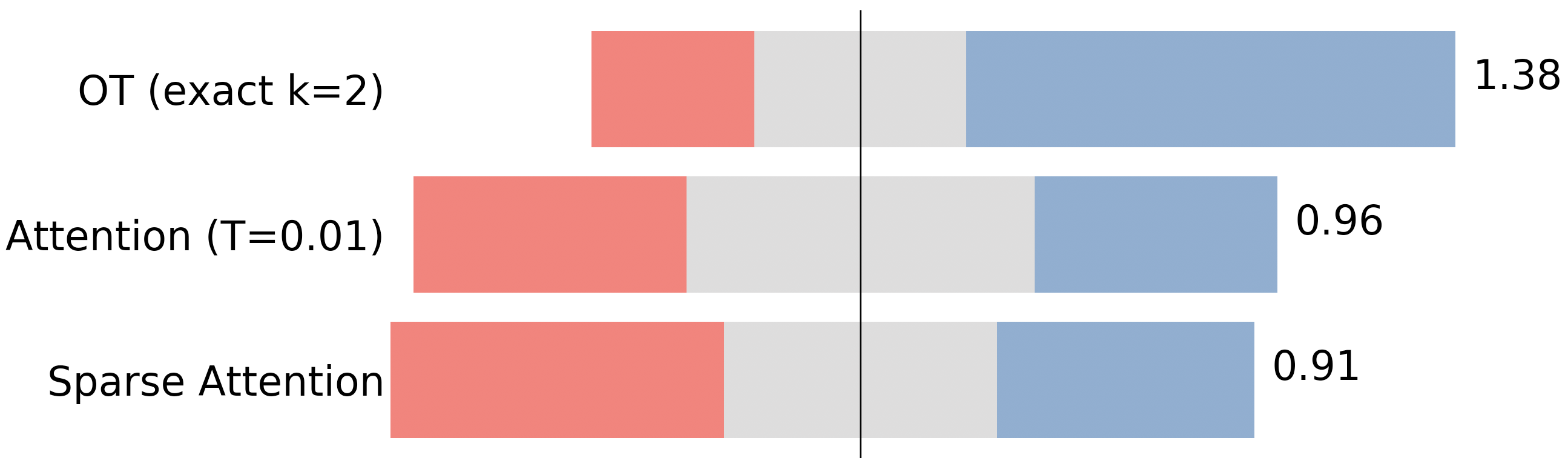}
        \caption{Redundancy}
        \label{fig:redundancy}
    \end{subfigure}
    \vskip\baselineskip
    \begin{subfigure}[b]{\linewidth}   
        \centering 
        \includegraphics[width=0.8\textwidth]{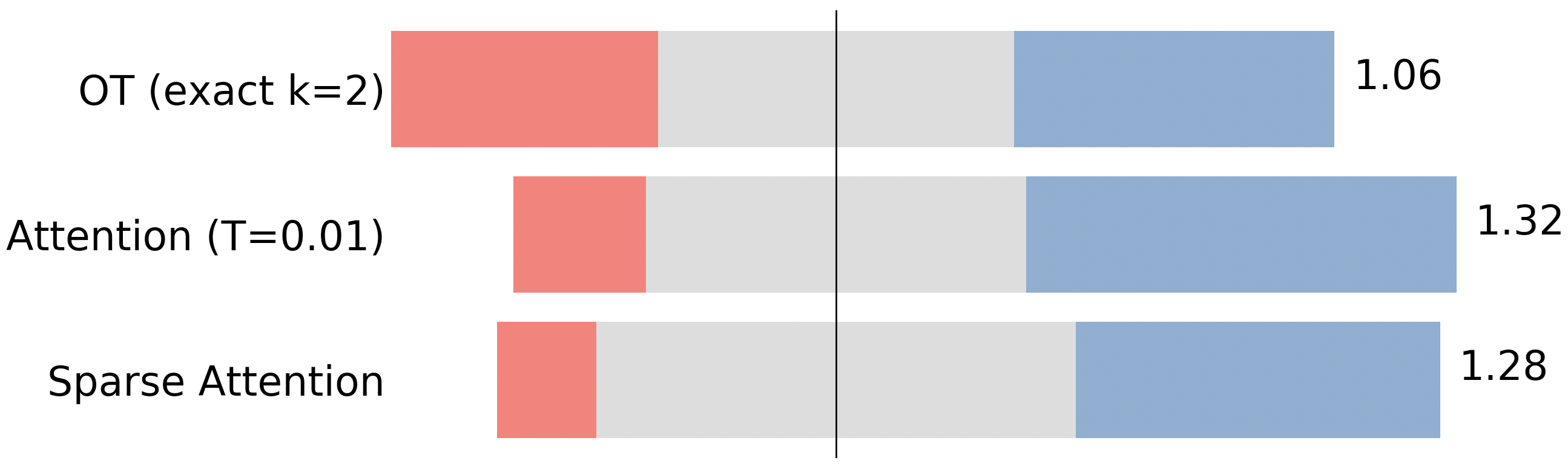}
        \caption{Relevance}    
        \label{fig:relevance}
    \end{subfigure}
        \vskip\baselineskip
    \begin{subfigure}[b]{\linewidth}   
        \centering 
        \includegraphics[width=0.8\textwidth]{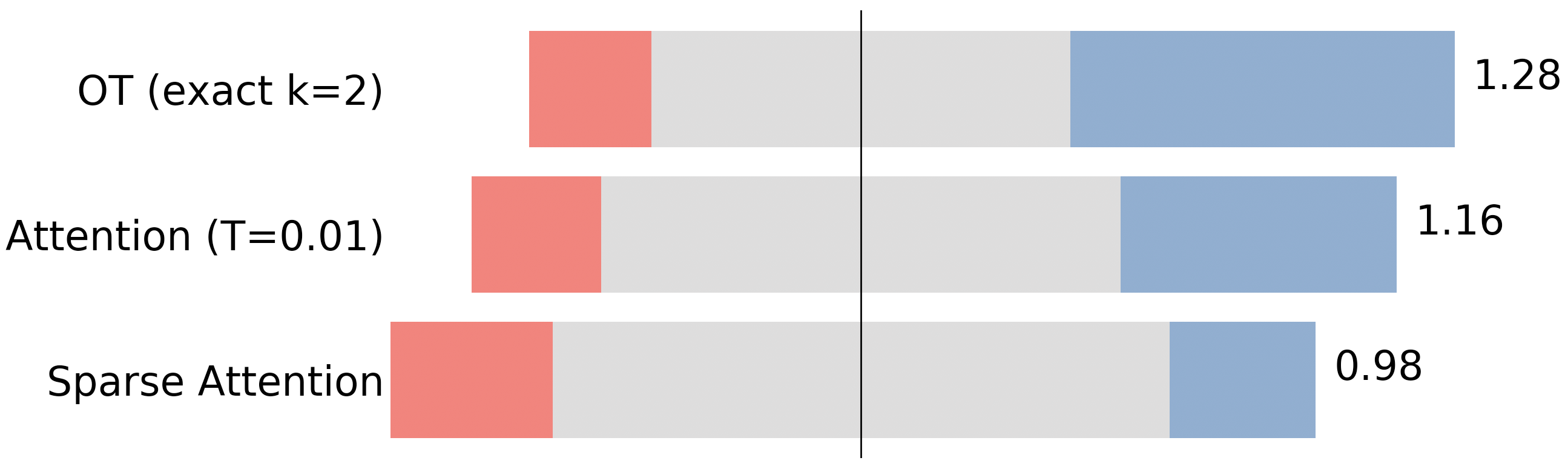}
        \caption{Overall quality}    
        \label{fig:overall}
    \end{subfigure}
    \caption{Human evaluation of rationales extracted from StackExchange document pairs using metrics of redundancy, relevance, and overall quality. Scores are either 0 (red), 1 (gray), or 2 (blue) and higher is better. The length of each bar segment indicates the proportion of examples with that score, and the number to the right of each bar is the average score.}
    \label{fig:human_eval}
\end{figure}

\begin{figure*}[!t]
    \begin{tabular}{cc}
        \includegraphics[height=2.0in]{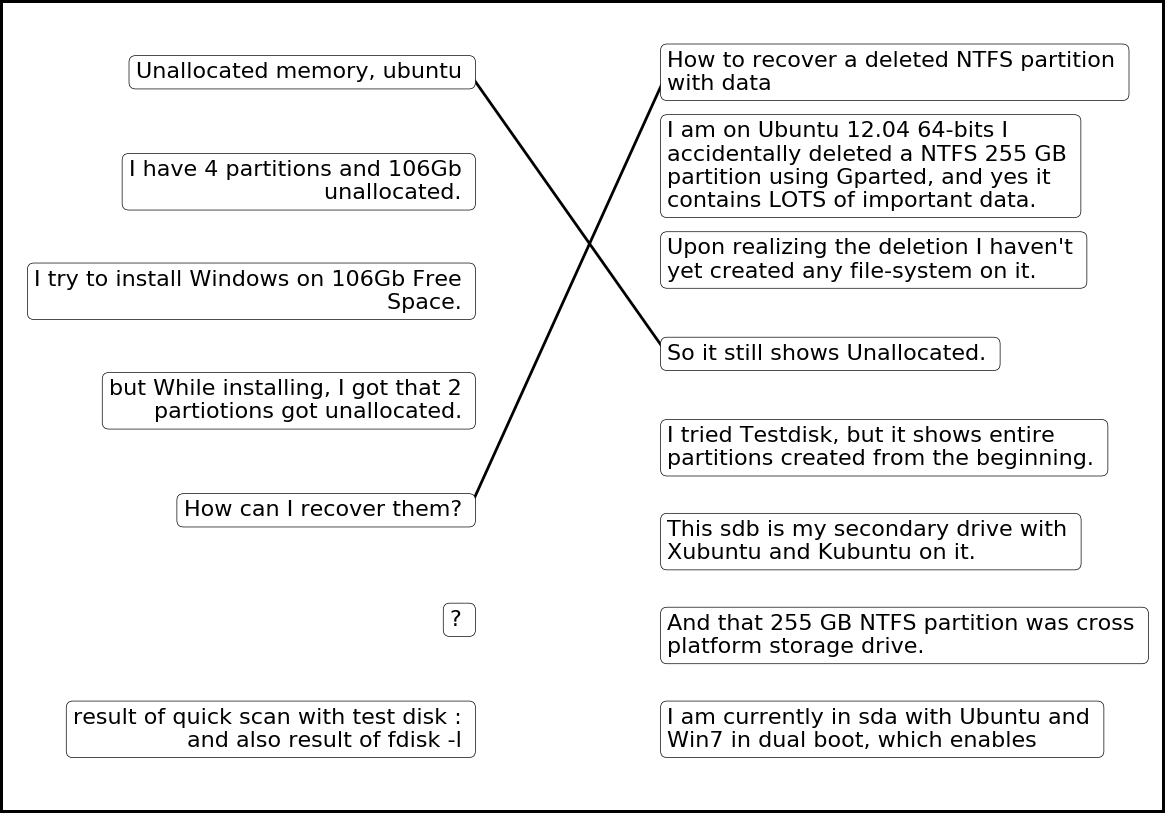}
        & 
        \includegraphics[height=2.0in]{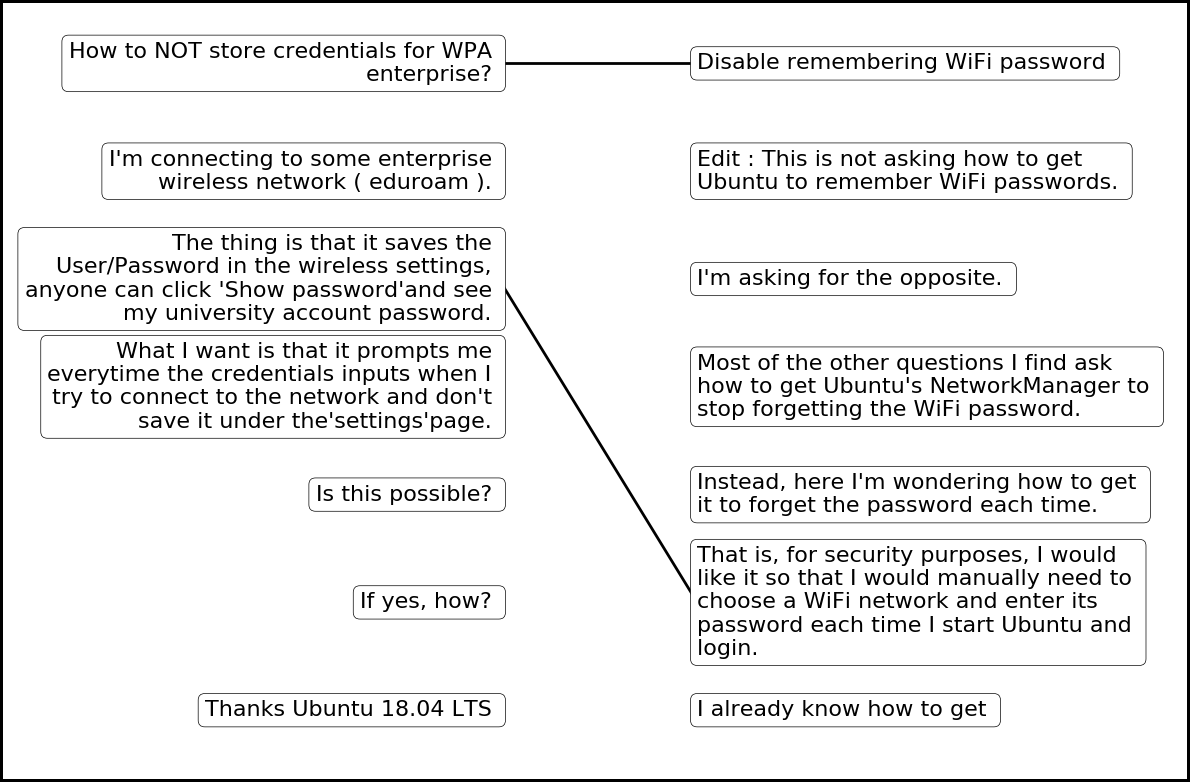}
        \\ 
        \includegraphics[height=1.95in]{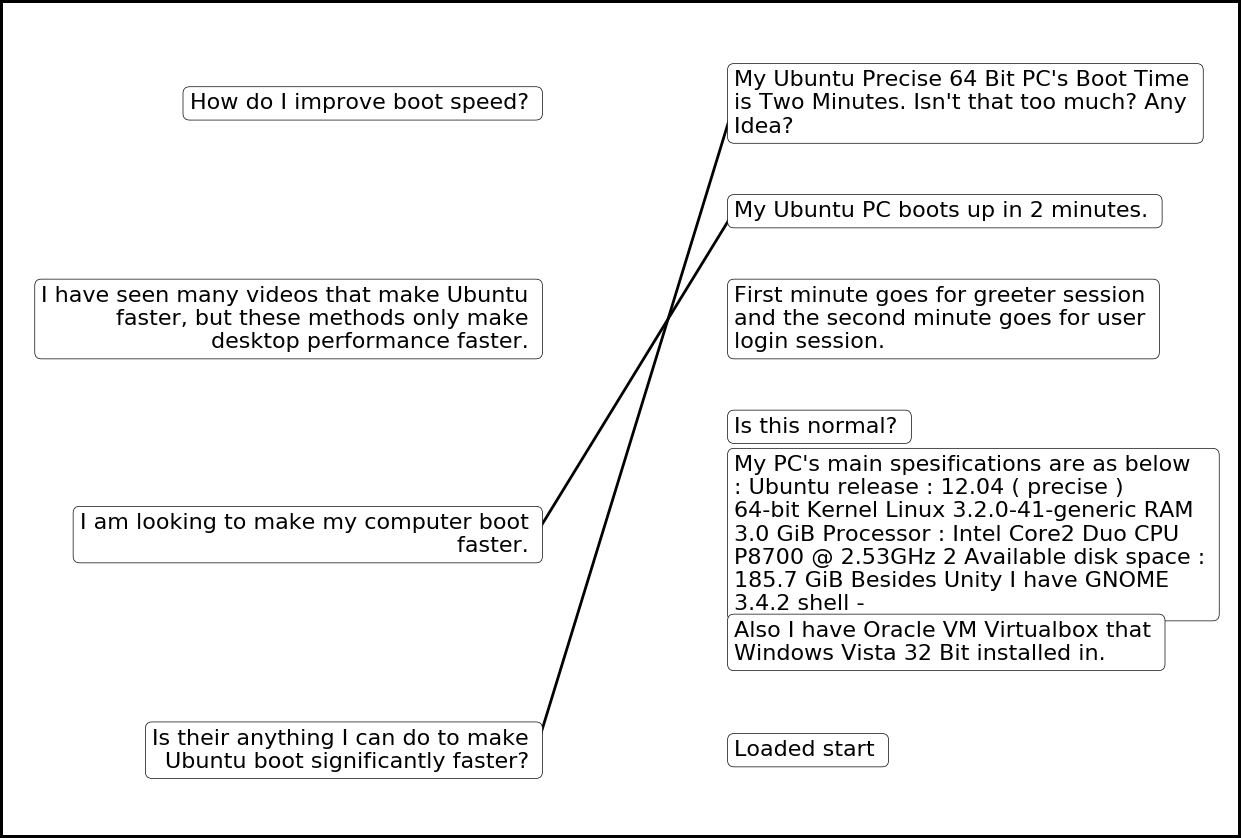}
        &
        \includegraphics[height=1.95in]{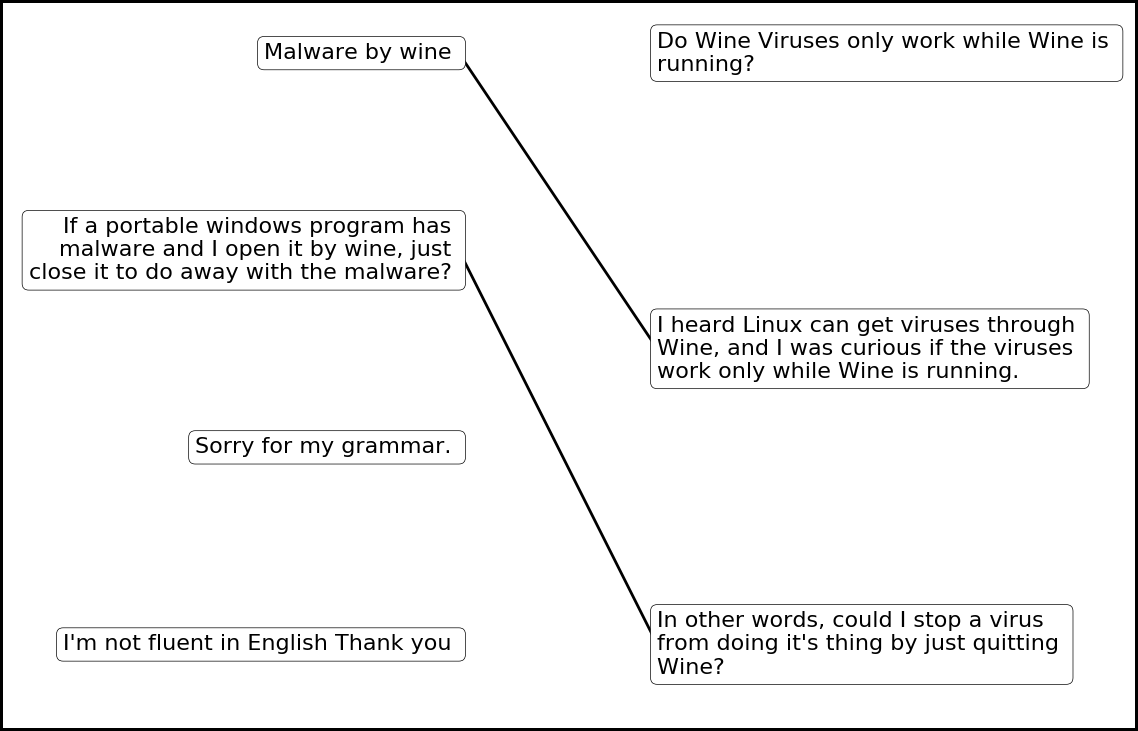}
    \end{tabular}
    \caption{Examples of extracted rationales from the StackExchange dataset using the OT (exact $k=2$) model. Each rationale alignment is displayed visually as lines connecting pairs of sentences from the two text documents.}
    \label{fig:examples}
\end{figure*}
\begin{figure*}[!t]
\centering
    \begin{tabular}{c@{~~~~~~~~~~~~~~~~}c@{~~~~~~~~~~~~~~~~}c}
    \includegraphics[height=2.05in]{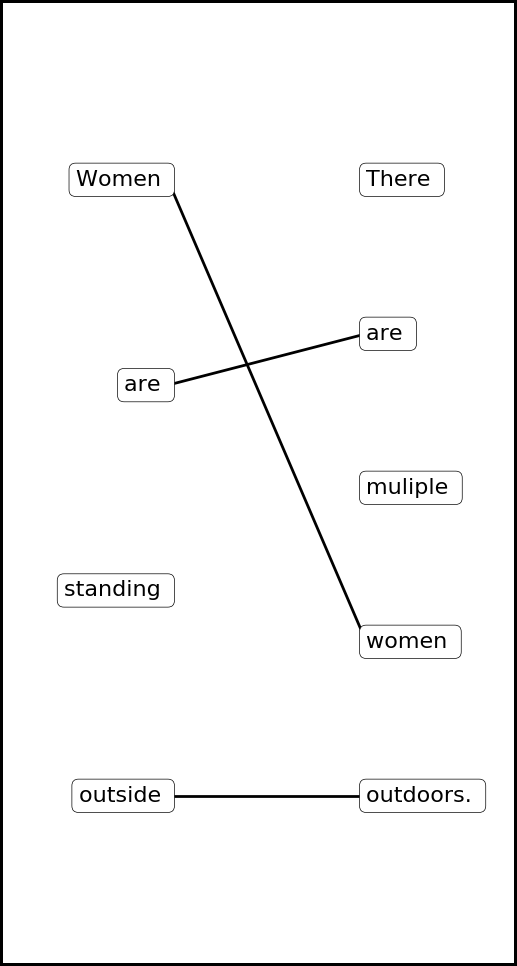}
    &
    \includegraphics[height=2.05in]{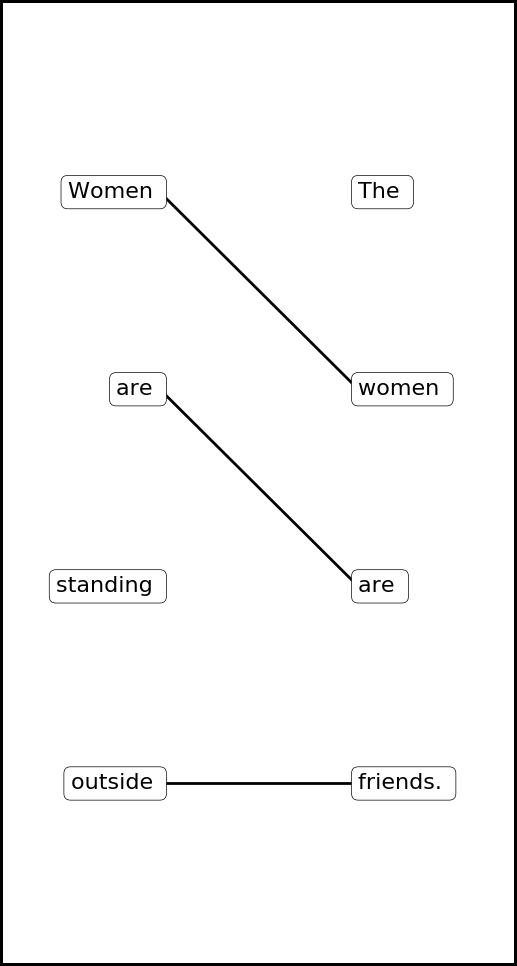}
    &
    \includegraphics[height=2.05in]{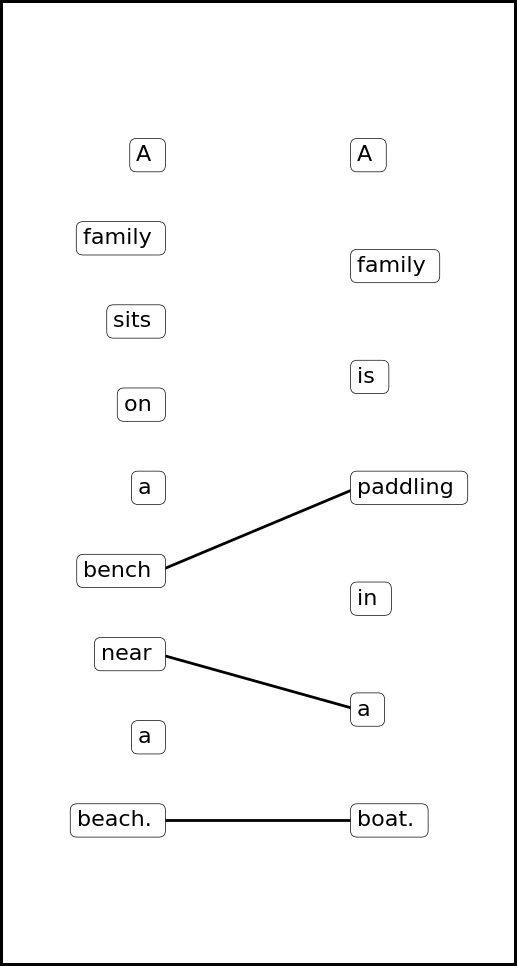}
    \\
    \\
    \includegraphics[height=2.05in]{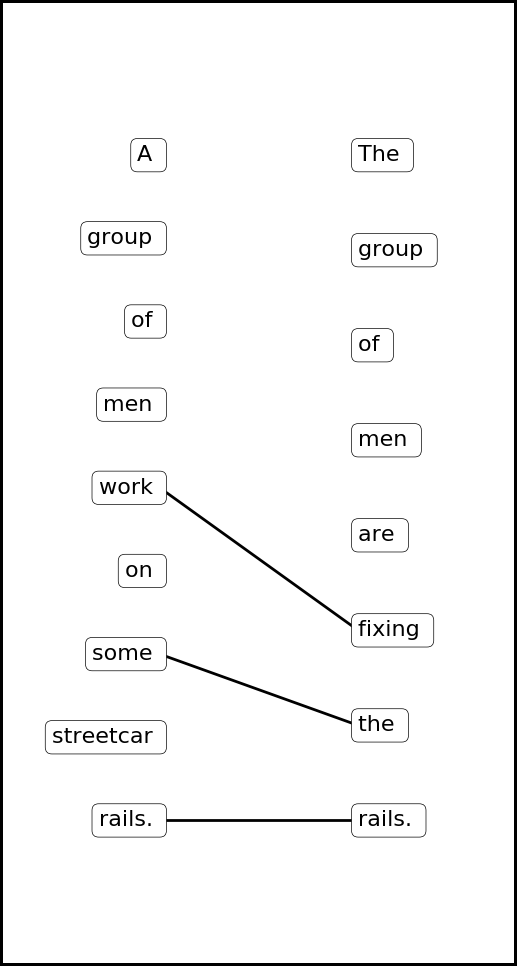}
    &
    \includegraphics[height=2.05in]{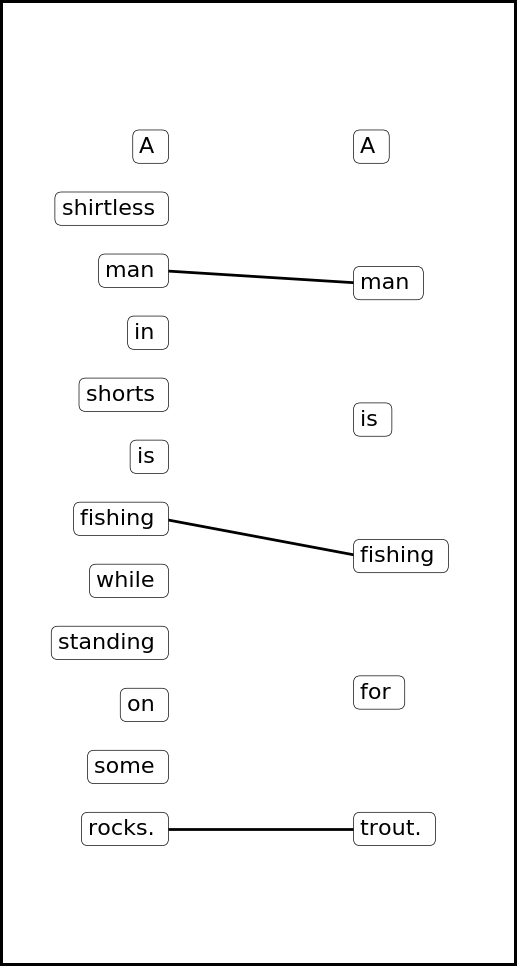}
    &
    \includegraphics[height=2.05in]{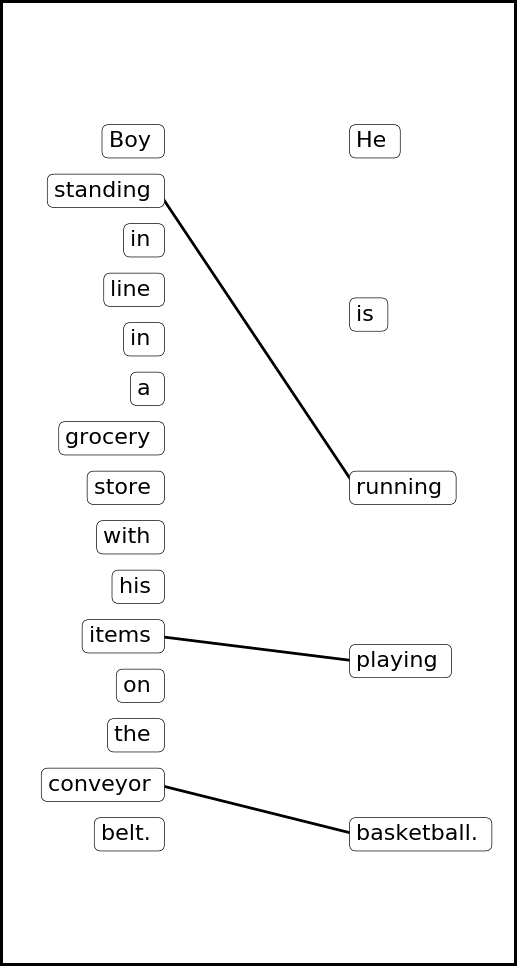}
    \end{tabular}
    \caption{Examples of extracted rationales from the e-SNLI dataset using the OT (exact $k=3$) model. We show two examples of entailment (left column), neutral (middle column) and contradiction (right column).}
    \label{fig:examples_snli}
\end{figure*}

\section{Additional Results}
\label{ssec:mulirc_sru}

\paragraph{MultiRC Experiments with Recurrent Encoder.}
Table \ref{table:multirc-sru} shows the experimental results on the MultiRC dataset when we replace the RoBERTa encoder (results shown in Table \ref{table:multirc}) with the bi-directional simple recurrent unit (SRU) encoder \cite{lei2017simple} that we used for the MultiNews and e-SNLI datasets. In the unsupervised rationale learning setting, the SRU alignment models achieve lower task F1 score and lower rationale token F1 score than the RoBERTa counterpart. Nevertheless, our models still outperform attention-based models, the unsupervised rationale extraction baseline ~\cite{Lei_2016} implemented in ~\citet{deyoung2019eraser}, and even one supervised rationale model ~\cite{lehman} implemented in ~\citet{deyoung2019eraser}. In the supervised rationale learning setting, the SRU alignment models achieve performance comparable to that of the RoBERTa alignment models. Both alignment models achieve higher rationale F1 score than the baseline models, regardless of the encoder architecture, demonstrating the strength of our model for learning rationales. 

\begin{table}[!t!]
\resizebox{0.47\textwidth}{!}{
\centering
\small
\begin{tabular}{l|c|c|c}
\toprule
Model & Task F1 & \% Token & R. F1 \\
\midrule
OT (1:1)         &  59.5    & 20.3   & 24.2   \\
OT (1:2)         &  60.1    & 28.0   & 26.5   \\
OT (relaxed 1:1)         &  59.7   & 13.6   & 19.5   \\
OT (relaxed 1:2)         &  60.2    & 24.7   & 29.1   \\
OT (exact $k=2$)         &  61.0    & 15.2   & 22.7   \\
\midrule
Attention                &  61.4    & 33.2  & 15.7  \\
Attention ($T=0.1$)      & 61.0     & 34.7  & 17.5   \\
Attention ($T=0.01$)     & 61.0    & 34.4  & 18.5    \\
Sparse Attention         & 60.7    & 37.5  & 25.0   \\
\midrule
OT (1:1) (+S)     &  62.1    & 20.5   & 48.1     \\
OT (1:2) (+S)     &  60.0    & 31.3   & 46.0     \\
OT (relaxed 1:1) (+S)         &  60.3    & 18.2   & 46.2   \\
OT (relaxed 1:2) (+S)         &   60.6    & 25.2   & 44.9   \\
OT (exact $k=2$) (+S)     &  61.2    & 16.7   & 48.7     \\
\bottomrule
\end{tabular}}
\caption{MultiRC macro-averaged task F1, percentage of tokens used in active alignments, and token-level F1 of the model-selected rationales compared to human-annotated rationales (R. F1).
(+S) denotes supervised learning of rationales.
All models use a simplified recurrent unit~\cite{lei2017simple} encoder. 
\label{table:multirc-sru}
}
\end{table}

\section{Implementation Details}
\label{ssec:impl_details}

\paragraph{Text Span Extraction.}

Sentences are extracted from the documents using the sentence tokenizer from the \texttt{nltk} Python package\footnote{\url{https://www.nltk.org/}} \cite{nltk}.

\paragraph{Text Embeddings.}

For the bi-directional recurrent encoder, we use pre-trained {\tt fastText} \cite{Bojanowski_2017} word embeddings, while for the RoBERTa encoder, we use its own pre-trained BPE embeddings.

\paragraph{OT Cost Functions.}

We use negative cosine similarity as the cost function for our \textrm{OT (relaxed 1:1)} model to achieve both positive and negative values in the cost matrix.
For all the other OT variants, we use cosine distance, which is non-negative.
We found that cosine-based costs work better than euclidean and dot-product costs for our model.

\paragraph{Sinkhorn Stability.} 

To improve the computational stability of the Sinkhorn algorithm, we use an epsilon scaling trick~\cite{stable_sinkhorn} which repeatedly runs the Sinkhorn iterations with progressively smaller values of epsilon down to a final epsilon of $10^{-4}$.

\paragraph{Loss Function.}
For the document ranking tasks, MultiNews and StackExchange, we train our model using a contrastive loss based on the difference between the optimal transport costs of aligning similar and dissimilar documents. Given a document $D$, if $\mathbf{C}^+$ is the cost matrix between $D$ and a similar document and $\{\mathbf{C}_i^-\}_{i=1}^l$ are the cost matrices between $D$ and $l$ dissimilar documents, then the loss is defined as
\begin{equation*}
    \max_{i \in [[l]]} \left [ \max ( \langle \mathbf{C}^+, \mathbf{P}^+ \rangle - \langle \mathbf{C}_i^-, \mathbf{P}_i^- \rangle + \Delta, 0) \right ],
\end{equation*}
where $\mathbf{P}^+$ and $\mathbf{P}_i^-$ are the OT alignment matrices computed by the Sinkhorn algorithm for $\mathbf{C}^+$ and $\mathbf{C}_i^-$, respectively, and where $\Delta$ is the hinge margin.

For the classification tasks, e-SNLI and MultiRC, we use the standard cross entropy loss applied to the output of a shallow network that processes the cost and alignment matrices.
Specifically, our model implementation is similar to the decomposable attention model~\cite{decomposableattention}, in which the attention-weighted hidden representation is given to a simple 2-layer feed-forward network to generate the classification prediction.
We similarly use the alignment output $\mathbf{P}$ from OT as the weight mask (which will be sparse) to select and average over hidden representations.

\paragraph{Comparison to Human-Annotated Rationales.}

The e-SNLI and MultiRC datasets from the ERASER benchmark provide human rationale annotations, enabling a comparison of model-selected rationales to human-annotated rationales.
However, the rationales are provided independently for each of the two input documents without alignment information.
Therefore, in order to compare our models' rationales to the human annotations, we need to convert our pairwise alignments to independent binary selection rationales for each of the two input documents.
This can be accomplished via thresholding, as described below.

Given an alignment matrix $\mathbf{P} \in \mathbb{R}_+^{n \times m}$ aligning documents $X = \{\mathbf{x}_i\}_{i=1}^n$ and $Y = \{\mathbf{y}_i\}_{i=1}^m$, the goal is to determine two binary rationale selection vectors $\mathbf{R}^\mathbf{x} \in \{0,1\}^n$ and  $\mathbf{R}^\mathbf{y} \in \{0,1\}^m$ indicating which text spans in $X$ and $Y$ are selected.
Each entry of $\mathbf{R}^\mathbf{x}$ and $\mathbf{R}^\mathbf{y}$ is computed as $\mathbf{R}^\mathbf{x}_i = \mathbbm{1}[\sum_{j=1}^m \mathbbm{1}[\mathbf{P}_{i,j} > \delta] > 0]$ and $\mathbf{R}^\mathbf{y}_j = \mathbbm{1}[\sum_{i=1}^n \mathbbm{1}[\mathbf{P}_{i,j} > \delta] > 0]$, where $\mathbbm{1}[\cdot]$ is an indicator function.
Intuitively, this means that $\mathbf{R}^\mathbf{x}_i = 1$ if $\mathbf{P}_{i,j} > \delta$ for any $j=1,\dots,m$, i.e., if at least one text span in $Y$ aligns to text span $\mathbf{x}_i$, and $\mathbf{R}^\mathbf{x}_i = 0$ otherwise.
The meaning is the equivalent for $\mathbf{R}^\mathbf{y}_j$.

The binary selection rationales $\mathbf{R}^\mathbf{x}$ and $\mathbf{R}^\mathbf{y}$ can then be compared against the human-annotated rationales as measured by the F1 score.
The threshold $\delta$ is selected based on the $\delta$ which produces the greatest F1 score on the validation set.

\paragraph{Supervised Rationale Training.}

Our models are designed to learn alignments in an unsupervised manner, but it is possible to alter them to learn from human-annotated rationales in a supervised way.

We do this by constructing a soft version of the independent binary rationale selections described in the previous section.
First, we compute $\widetilde{\mathbf{R}}^\mathbf{x}_i = \sum_{j=1}^m \mathbf{P}_{i,j}$ and $\widetilde{\mathbf{R}}^\mathbf{y}_j = \sum_{i=1}^n \mathbf{P}_{i,j}$ as soft rationale indicators.
We then compute the cross entropy loss $\mathcal{L}_r$ between these soft predictions and the human-annotated rationales.
This loss is combined with the usual task classification cross entropy loss $\mathcal{L}_c$ to form the total loss
\begin{equation*}
    \mathcal{L} = \alpha \cdot \mathcal{L}_c + (1 - \alpha) \cdot \mathcal{L}_r,
\end{equation*}
where $\alpha$ is a hyperparameter. In our experiments, we set $\alpha = 0.2$.

\paragraph{Model Complexity and Speed.}

Table~\ref{table:size_and_speed} compares the model complexity and model speed between OT-based and attention-based models with bi-directional recurrent encoders \cite{lei2017simple}. 
Our model does not add any trainable parameters on top of the text encoder, making it smaller than its attention-based counterparts, which use additional parameters in the attention layer. Our model is 3.3 times slower than attention during training and 1.6 times slower than attention during inference due to the large number of iterations required by the Sinkhorn algorithm for OT. 

\begin{table}[!t!]
\centering
\resizebox{0.47\textwidth}{!}{
\small
\begin{tabular} {l|c|c|c} 
    \toprule
    Model & \#\,Parameters & Train time (s) & Infer time (s) \\
    \midrule
    OT & 2.0M & 600 & 8.0e-3 \\
    Attention & 2.4M & 180 & 4.9e-3 \\
    \bottomrule
\end{tabular}}
\caption{Number of parameters, training time, and inference time for models on the StackExchange dataset. Training time represents training time per epoch while inference time represents the average time to encode and align one pair of documents. All models use an NVIDIA Tesla V100 GPU.}
\label{table:size_and_speed}
\end{table}

\paragraph{Additional Details.}

We use the Adam \cite{Adam} optimizer for training. 
Hyperparameters such as the hinge loss margin, dropout rate, and learning rate are chosen according to the best validation set performance. 
All models were implemented with PyTorch \cite{paszke2017}.
Table \ref{table:size_and_speed} shows model complexity, training time, and inference time for the StackExchange dataset.

\section{Obtaining Permutation Matrix Solutions to Optimal Transport Problems}
\label{ssec:unique_solution}

Our goal in this paper is to create an optimal transport problem that results in an assignment between two sets $X$ and $Y$.
The core idea is to create an expanded optimal transport problem between augmented sets $X'$ and $Y'$ such that $|X'| = |Y'| = n$.
Then Proposition \ref{prop:permutation} implies that the optimal transport problem with $\mathbf{a} = \mathbf{b} = \mathbbm{1}_n / n$ has a permutation matrix solution.
This permutation matrix represents a one-to-one assignment between $X'$ and $Y'$ from which we can extract an assignment between $X$ and $Y$.

However, a problem with this approach is that the permutation matrix solution might not be the only solution.
In general, linear programming problems may have many solutions, meaning we are not guaranteed to find a permutation matrix solution even if it exists.
Since we require a permutation matrix solution in order to obtain our desired sparsity bounds, we are therefore interested in methods for identifying the permutation matrix solution even when other solutions exist.
Although these methods were not necessary for our experiments, since the Sinkhorn algorithm almost always found a permutation matrix solution for our inputs, we present these methods to ensure that the techniques presented in this paper can be used even in cases with degenerate solutions.

One option is to avoid the problem altogether by using cost functions that are guaranteed to produce unique solutions. For example, \citet{brenier} showed that under some normality conditions, the cost function $c(\mathbf{x}, \mathbf{y}) = ||\mathbf{x} - \mathbf{y}||^2$, i.e., the Euclidean distance, produces OT problems with unique solutions. However, it is sometimes preferable to use cost functions with different properties (e.g., bounded range, negative cost, etc.) which may not guarantee a unique OT solution.

To find unique solutions for general cost functions, one method is to first find any solution to the optimal transport problem (e.g., by using the Sinkhorn algorithm) and then to use Birkhoff's algorithm \cite{birkhoff_algorithm} to express that solution as a convex combination of permutation matrices.
Since the original solution is optimal, every permutation matrix that is part of the convex combination must also be optimal (otherwise the cost could be reduced further by removing the suboptimal matrix from the combination and rescaling the others).
Thus we can pick any of the permutation matrices in the convex combination as our optimal permutation matrix solution.
However, since Birkhoff's algorithm is not differentiable, these procedure cannot be used in end-to-end training and can only be applied at inference time.

An alternate method, which preserves the differentiability of our overall approach, is to solve a modified version of the linear programming problem that is guaranteed to have a unique permutation matrix solution that closely approximates the solution the original problem.
Theorem \ref{thm:uniqueness} demonstrates that by adding random iid noise of at most $\epsilon$ to each element of the cost matrix $\mathbf{C}$ to create a new cost matrix $\mathbf{C}^\epsilon$, then with probability one, the resulting linear programming problem on $\mathbf{C}^\epsilon$ has a unique permutation matrix solution $\mathbf{P}^{\epsilon *}$ which costs at most $\epsilon$ more than the true optimal solution $\mathbf{P}^*$.
Thus, we can obtain a permutation matrix solution for $\mathbf{C}$ that is arbitrarily close to optimal.
Furthermore, Corollary \ref{cor:uniqueness} implies that if we know that the difference in cost between the optimal permutation matrix and the second best permutation matrix is $\delta$, then we can choose $\epsilon < \delta$ to ensure that we actually find an optimal permutation matrix.

\begin{theorem}
    \label{thm:uniqueness}
    Consider $L_{\mathbf{C}}(\mathbf{a}, \mathbf{b}) = \underset{\mathbf{P} \in \mathbf{U}(\mathbf{a}, \mathbf{b})}{\operatorname{argmin}} \langle \mathbf{C}, \mathbf{P} \rangle$, where $\mathbf{C} \in \mathbb{R}^{n \times n}$ is arbitrary and $\mathbf{a} = \mathbf{b} = \mathbbm{1}_n / n$.
    Let $\mathbf{E}^\epsilon \in \mathbb{R}^{n \times n}$ be such that $\mathbf{E}^\epsilon_{ij} \overset{iid}{\sim} \mathcal{U}([0, \epsilon])$ where $\epsilon > 0$ and $\mathcal{U}$ is the uniform distribution.
    Define $\mathbf{C}^\epsilon = \mathbf{C} + \mathbf{E}^\epsilon$.
    Let
    \begin{equation*}
        \mathbf{P}^* = \underset{\mathbf{P} \in \mathbf{U}(\mathbf{a}, \mathbf{b})}{\operatorname{argmin}} \langle \mathbf{C}, \mathbf{P} \rangle
    \end{equation*}
    and
    \begin{equation*}
        \mathbf{P}^{\epsilon *} = \underset{\mathbf{P} \in \mathbf{U}(\mathbf{a}, \mathbf{b})}{\operatorname{argmin}} \langle \mathbf{C}^\epsilon, \mathbf{P} \rangle.
    \end{equation*}
    Then
    \begin{enumerate}
        \item $0 \leq \langle \mathbf{C}, \mathbf{P}^{\epsilon *} \rangle - \langle \mathbf{C}, \mathbf{P}^* \rangle \leq \epsilon$.
        \item With probability 1, $\mathbf{P}^{\epsilon *}$ is unique and is a permutation matrix.
    \end{enumerate}
\end{theorem}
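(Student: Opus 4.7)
The plan is to address the two claims separately, using only Proposition~\ref{prop:permutation}, the optimality of $\mathbf{P}^*$ and $\mathbf{P}^{\epsilon*}$, and a union bound over the finite set of permutation matrices.

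For the cost-gap bound in part~1, the lower bound is immediate from $\mathbf{P}^*$ minimizing $\langle\mathbf{C},\cdot\rangle$ over $\mathbf{U}(\mathbf{a},\mathbf{b})$, which contains $\mathbf{P}^{\epsilon*}$. For the upper bound I would start from the optimality inequality $\langle\mathbf{C}^\epsilon,\mathbf{P}^{\epsilon*}\rangle\le\langle\mathbf{C}^\epsilon,\mathbf{P}^*\rangle$, substitute $\mathbf{C}^\epsilon=\mathbf{C}+\mathbf{E}^\epsilon$, and rearrange to obtain
\begin{equation*}
\langle\mathbf{C},\mathbf{P}^{\epsilon*}\rangle - \langle\mathbf{C},\mathbf{P}^*\rangle \;\le\; \langle\mathbf{E}^\epsilon,\mathbf{P}^*\rangle - \langle\mathbf{E}^\epsilon,\mathbf{P}^{\epsilon*}\rangle.
\end{equation*}
The key observation is that because $\mathbf{a}=\mathbf{b}=\mathbbm{1}_n/n$, every $\mathbf{P}\in\mathbf{U}(\mathbf{a},\mathbf{b})$ is entrywise nonnegative with total mass $1$, and $\mathbf{E}^\epsilon$ has entries in $[0,\epsilon]$, so $\langle\mathbf{E}^\epsilon,\mathbf{P}\rangle\in[0,\epsilon]$ for every feasible $\mathbf{P}$. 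The right-hand side is therefore bounded above by $\epsilon-0=\epsilon$.

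For part~2, I would invoke the standard LP fact that the set of optima of the bounded linear program $\min_{\mathbf{P}\in\mathbf{U}(\mathbf{a},\mathbf{b})}\langle\mathbf{C}^\epsilon,\mathbf{P}\rangle$ is a nonempty face of $\mathbf{U}(\mathbf{a},\mathbf{b})$. Combined with Proposition~\ref{prop:permutation}, which says that every extreme point of the feasible polytope is a (scaled) permutation matrix, this means an optimum always exists as a permutation matrix, and non-uniqueness of the optimum would force two distinct permutation matrices $\mathbf{\Pi}_1\ne\mathbf{\Pi}_2$ to satisfy $\langle\mathbf{C}^\epsilon,\mathbf{\Pi}_1-\mathbf{\Pi}_2\rangle=0$. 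For any fixed pair this is a single linear equation in the iid continuous variables $\{\mathbf{E}^\epsilon_{ij}\}$ whose coefficients are the entries of $\mathbf{\Pi}_1-\mathbf{\Pi}_2$; at least one coefficient is nonzero because $\mathbf{\Pi}_1\ne\mathbf{\Pi}_2$, so the event that this hyperplane equation holds has probability zero. A union bound over the at most $(n!)^2$ ordered pairs of permutation matrices then shows that almost surely no two permutation matrices share a cost, so the optimal face collapses to a single extreme point, which is a permutation matrix.

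The main obstacle I anticipate is the LP-face bookkeeping in part~2: one has to be careful that non-uniqueness of the continuous LP optimum really does reduce to two \emph{extreme} permutation matrices having equal cost, rather than some subtler coincidence among interior points. The face-of-a-polytope characterization settles this cleanly, since any face is the convex hull of the extreme points it contains, so a face strictly larger than a single point must contain at least two distinct extreme permutation matrices of equal cost. Everything else is routine: an elementary entrywise bound for part~1 and a finite union bound for part~2.
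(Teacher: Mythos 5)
Your proposal is correct and follows essentially the same route as the paper's proof: the same optimality-inequality chain with the entrywise bounds $0 \leq \langle \mathbf{E}^\epsilon, \mathbf{P} \rangle \leq \epsilon$ for part 1, and for part 2 the same reduction via Birkhoff's theorem to the event that two distinct permutation matrices tie in cost, killed by a union bound and the observation that a tie is a nondegenerate linear equation in the iid continuous noise entries. Your face-of-the-polytope phrasing is just a cleaner packaging of the paper's "convex combination of optimal extreme points" step, not a different argument.
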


\begin{proof}
We begin by proving result 1.
Since $\mathbf{P}^*$ is optimal for $\mathbf{C}$, it must be true that $\langle \mathbf{C}, \mathbf{P} \rangle \leq \langle \mathbf{C}, \mathbf{P}' \rangle$ for any $\mathbf{P}' \in \mathbf{U}(\mathbf{a}, \mathbf{b})$.
As $\mathbf{P}^{\epsilon *} \in \mathbf{U}(\mathbf{a}, \mathbf{b})$, we thus have $\langle \mathbf{C}, \mathbf{P}^* \rangle \leq \langle \mathbf{C}, \mathbf{P}^{\epsilon *} \rangle$ and so $\langle \mathbf{C}, \mathbf{P}^{\epsilon *} \rangle - \langle \mathbf{C}, \mathbf{P}^* \rangle \geq 0$.

To prove the other side of the inequality, first note that for any $\mathbf{P} \in \mathbf{U}(\mathbf{a}, \mathbf{b})$, we have $\langle 
\mathbf{E}^\epsilon, \mathbf{P} \rangle \geq 0$ since $\mathbf{E}^\epsilon_{ij}, \mathbf{P}_{ij} \geq 0$ for all $i,j$.
Combining this with the optimality of $\mathbf{P}^{\epsilon *}$ for $\mathbf{C}^\epsilon$, we can see that
\begin{align*}
    &\langle \mathbf{C}, \mathbf{P}^{\epsilon *} \rangle - \langle \mathbf{C}, \mathbf{P}^* \rangle \\
    &\leq \langle \mathbf{C}, \mathbf{P}^{\epsilon *} \rangle  + \langle \mathbf{E}^\epsilon, \mathbf{P}^{\epsilon *} \rangle - \langle \mathbf{C}, \mathbf{P}^* \rangle \\
    &= \langle \mathbf{C} + \mathbf{E}^\epsilon, \mathbf{P}^{\epsilon *} \rangle - \langle \mathbf{C}, \mathbf{P}^* \rangle \\
    &= \langle \mathbf{C}^\epsilon, \mathbf{P}^{\epsilon *} \rangle - \langle \mathbf{C}, \mathbf{P}^* \rangle \\
    &\leq \langle \mathbf{C}^\epsilon, \mathbf{P}^* \rangle - \langle \mathbf{C}, \mathbf{P}^* \rangle \\
    &= \langle \mathbf{C}^\epsilon - \mathbf{C}, \mathbf{P}^* \rangle \\
    &= \langle \mathbf{C} + \mathbf{E}^\epsilon - \mathbf{C}, \mathbf{P}^* \rangle \\
    &= \langle \mathbf{E}^\epsilon, \mathbf{P}^* \rangle \\
    &\leq \epsilon,
\end{align*}
where the final inequality holds because the entries of $\mathbf{P}^*$ are positive and sum to one and the entries of $\mathbf{E}^\epsilon$ are at most $\epsilon$.
Thus results 1 holds.

Now we will prove result 2.
Since we are solving a linear programming problem over a bounded, convex set $\mathbf{U}(\mathbbm{1}_n / n, \mathbbm{1}_n / n)$, every solution is a convex combination of optimal extremal points.
Thus, a linear program has a unique optimal solution if and only if exactly one of the extremal points is optimal.
By Birkhoff's theorem \cite{birkhoff}, the set of extremal points of $\mathbf{U}(\mathbbm{1}_n / n, \mathbbm{1}_n / n)$ is equal to the set of permutation matrices.
Therefore, if only a single permutation matrix $\mathbf{P}^\sigma$ is optimal for $L_{\mathbf{C}^\epsilon}(\mathbf{a}, \mathbf{b})$, then $\mathbf{P}^\sigma$ is the unique solution.

The goal is thus to show that the event that any two permutation matrices $\mathbf{P}^{\sigma_i}$ and $\mathbf{P}^{\sigma_j}$ corresponding to permutations $\sigma_i \neq \sigma_j$ both solve $L_{\mathbf{C}^\epsilon}(\mathbf{a}, \mathbf{b})$ has probability zero.
The union bound gives
\begin{align*}
    \mathbb{P}(\cup_{\sigma_i \neq \sigma_j}\ \mathbf{P}^{\sigma_i}, \mathbf{P}^{\sigma_j} \textrm{ both solve } L_{\mathbf{C}^\epsilon}(\mathbf{a}, \mathbf{b})&) \\
    \leq \sum_{\sigma_i \neq \sigma_j} \mathbb{P}(\mathbf{P}^{\sigma_i}, \mathbf{P}^{\sigma_j} \textrm{ both solve } L_{\mathbf{C}^\epsilon}(\mathbf{a}, \mathbf{b})&).
\end{align*}
The number of pairs $\sigma_i$ and $\sigma_j$ of distinct permutations of $n$ items is ${n! \choose 2} < \infty$ so the sum is over a finite number of probabilities.
Thus, if we can show that $\mathbb{P}(\mathbf{P}^{\sigma_i}, \mathbf{P}^{\sigma_j} \textrm{ both solve } L_{\mathbf{C}^\epsilon}(\mathbf{a}, \mathbf{b})) = 0$ for any $\sigma_i \neq \sigma_j$, then the sum will also be zero and result 2 will hold.

To show that this is the case, take any two permutations matrices $\mathbf{P}^{\sigma_1}$ and $\mathbf{P}^{\sigma_2}$ for $\sigma_1 \neq \sigma_2$ which are both optimal for $L_{\mathbf{C}^\epsilon}(\mathbf{a}, \mathbf{b})$.
Then it must be true that
\begin{equation*}
    n \langle \mathbf{C}^\epsilon, \mathbf{P}^{\sigma_1} \rangle = n \langle \mathbf{C}^\epsilon, \mathbf{P}^{\sigma_2} \rangle
\end{equation*}
or equivalently 
\begin{equation}
    \label{eq:sumeq}
    n \sum_{i,j = 1}^n \mathbf{C}^\epsilon_{ij} \mathbf{P}^{\sigma_1}_{ij} = n \sum_{k,l = 1}^n \mathbf{C}^\epsilon_{kl} \mathbf{P}^{\sigma_2}_{kl}.
\end{equation}

Let $I^1 \subseteq \{1, \dots, n\} \times \{1, \dots, n\}$ be the indices $(i, j)$ where $\mathbf{P}^{\sigma_1}_{ij} = \frac{1}{n}$ and $\mathbf{P}^{\sigma_2}_{ij} = 0$ and let $I^2 \subseteq \{1, \dots, n\} \times \{1, \dots, n\}$ be the indices $(i, j)$ where $\mathbf{P}^{\sigma_2}_{ij} = \frac{1}{n}$ and $\mathbf{P}^{\sigma_1}_{ij} = 0$.
Thus, for any $(i, j) \notin I^1 \cup I^2$, $P^{\sigma_1}_{ij} = P^{\sigma_2}_{ij}$ and so the terms corresponding to that $(i, j)$ cancel in equation (\ref{eq:sumeq}).
This means that Equation (\ref{eq:sumeq}) can be rewritten as
\begin{equation*}
    n \sum_{i,j \in I^1 \cup I^2} \mathbf{C}^\epsilon_{ij} \mathbf{P}^{\sigma_1}_{ij} = n \sum_{k,l \in I^1 \cup I^2} \mathbf{C}^\epsilon_{kl} \mathbf{P}^{\sigma_2}_{kl}
\end{equation*}
or equivalently, using the definition of $I^1$ and $I^2$, as
\begin{equation*}
    \sum_{i,j \in I^1} \mathbf{C}^\epsilon_{ij} = \sum_{k,l \in I^2} \mathbf{C}^\epsilon_{kl}.
\end{equation*}
Using the definition of $\mathbf{C}^\epsilon$, this becomes
\begin{equation*}
    \sum_{i,j \in I^1} \mathbf{C}_{ij} + \mathbf{E}^\epsilon_{ij} = \sum_{k,l \in I^2} \mathbf{C}_{kl} + \mathbf{E}^\epsilon_{kl}.
\end{equation*}
Grouping terms, we get
\begin{equation*}
    \sum_{i,j \in I^1} \mathbf{E}^\epsilon_{ij} - \sum_{k,l \in I^2} \mathbf{E}^\epsilon_{kl} = \sum_{k,l \in I^2} \mathbf{C}_{kl} - \sum_{i,j \in I^1} \mathbf{C}_{ij}.
\end{equation*}
Since the LHS is a sum/difference of independent continuous random variables and the RHS is a constant, the event that the LHS equals the RHS has probability zero.
Thus, the event that any two permutation matrices $\mathbf{P}^{\sigma_1}$ and $\mathbf{P}^{\sigma_2}$ with $\sigma_1 \neq \sigma_2$ are both optimal for $L_{\mathbf{C}^\epsilon}(\mathbf{a}, \mathbf{b})$ has probability zero.

\end{proof}

\begin{corollary}
\label{cor:uniqueness}

If $\langle \mathbf{C}, \mathbf{P}^\sigma \rangle - \langle \mathbf{C}, \mathbf{P}^* \rangle = 0$ or $\langle \mathbf{C}, \mathbf{P}^\sigma \rangle - \langle \mathbf{C}, \mathbf{P}^* \rangle > \epsilon$ for every permutation matrix $\mathbf{P}^\sigma$, then the permutation matrix $\mathbf{P}^{\epsilon *}$ is an exact solution to $L_{\mathbf{C}}(\mathbf{a}, \mathbf{b})$.
\end{corollary}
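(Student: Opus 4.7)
The plan is to deduce the corollary almost directly from Theorem~\ref{thm:uniqueness} by a one-line case analysis, so the proof sketch is very short.

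First I would invoke Theorem~\ref{thm:uniqueness} to obtain two facts about $\mathbf{P}^{\epsilon *}$: with probability one it is a permutation matrix, and its cost satisfies
$
0 \leq \langle \mathbf{C}, \mathbf{P}^{\epsilon *} \rangle - \langle \mathbf{C}, \mathbf{P}^* \rangle \leq \epsilon.
$
The first fact is precisely what lets us apply the hypothesis of the corollary to $\mathbf{P}^{\epsilon *}$ itself: because $\mathbf{P}^{\epsilon *}$ is a permutation matrix, the dichotomy in the statement tells us that either $\langle \mathbf{C}, \mathbf{P}^{\epsilon *} \rangle - \langle \mathbf{C}, \mathbf{P}^* \rangle = 0$ or $\langle \mathbf{C}, \mathbf{P}^{\epsilon *} \rangle - \langle \mathbf{C}, \mathbf{P}^* \rangle > \epsilon$.

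Next I would rule out the second alternative by comparing it with the upper bound from Theorem~\ref{thm:uniqueness}: the strict inequality $\langle \mathbf{C}, \mathbf{P}^{\epsilon *} \rangle - \langle \mathbf{C}, \mathbf{P}^* \rangle > \epsilon$ directly contradicts $\langle \mathbf{C}, \mathbf{P}^{\epsilon *} \rangle - \langle \mathbf{C}, \mathbf{P}^* \rangle \leq \epsilon$. Hence only the first alternative remains, giving $\langle \mathbf{C}, \mathbf{P}^{\epsilon *} \rangle = \langle \mathbf{C}, \mathbf{P}^* \rangle$, which means $\mathbf{P}^{\epsilon *}$ achieves the minimum of $L_{\mathbf{C}}(\mathbf{a}, \mathbf{b})$ and is therefore an exact optimal solution.

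There is essentially no technical obstacle here; the entire content lives in Theorem~\ref{thm:uniqueness}, and the corollary is just a pigeonhole-style observation that if the suboptimality gap of $\mathbf{P}^{\epsilon *}$ must be either zero or strictly greater than $\epsilon$, and we already know it is at most $\epsilon$, then it must be zero. The only thing worth being careful about in writing it up is making the ``with probability one'' caveat explicit, inherited from Theorem~\ref{thm:uniqueness}, so that the conclusion is correctly stated as holding almost surely over the noise $\mathbf{E}^\epsilon$ rather than deterministically.
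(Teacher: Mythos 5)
Your proposal is correct and follows essentially the same argument as the paper: apply the bound $\langle \mathbf{C}, \mathbf{P}^{\epsilon *} \rangle - \langle \mathbf{C}, \mathbf{P}^* \rangle \leq \epsilon$ from Theorem~\ref{thm:uniqueness}, use the fact that $\mathbf{P}^{\epsilon *}$ is a permutation matrix to invoke the hypothesis, and conclude the gap must be zero. Your explicit remark that the conclusion holds almost surely over the noise $\mathbf{E}^\epsilon$ is a slightly more careful statement than the paper's, but the substance is identical.
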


\begin{proof}
Theorem \ref{thm:uniqueness} says that that $\langle \mathbf{C}, \mathbf{P}^{\epsilon *} \rangle - \langle \mathbf{C}, \mathbf{P}^* \rangle \leq \epsilon$.
Since $\mathbf{P}^{\epsilon *}$ is a permutation matrix, the assumptions in this corollary thus imply that that $\langle \mathbf{C}, \mathbf{P}^{\epsilon *} \rangle - \langle \mathbf{C}, \mathbf{P}^* \rangle = 0$, meaning $\mathbf{P}^{\epsilon *}$ is an exact solution to $L_{\mathbf{C}}(\mathbf{a}, \mathbf{b})$.
\end{proof}

\end{document}